\def\I{{\bf I}}
\def\x{{\bf x}}
\def\W{{\bf W}}
\def\0{{\bf 0}}
\def\1{{\bf 1}}
\def\DM{{\mathcal D}}
\def\TM{{\mathcal T}}
\def\LM{{\mathcal L}}
\def\SM{{\mathcal S}}
\def\argmax{\mathop{\rm argmax}}
\def\argmin{\mathop{\rm argmin}}
\newtheorem{theorem}{Theorem}
\newtheorem{lemma}{Lemma}
\newtheorem{definition}{Definition}
\numberwithin{theorem}{section}
\numberwithin{lemma}{section}
\numberwithin{remark}{section}
\numberwithin{cor}{section}
\numberwithin{proposition}{section}
\numberwithin{definition}{section}
\newcommand{\tabref}[1]{Table~\ref{#1}}
\newcommand{\secref}[1]{Sec.~\ref{#1}}
\newcommand{\appref}[1]{Appendix~\ref{#1}}
\newcommand{\figref}[1]{Fig.~\ref{#1}}
\newcommand{\lemref}[1]{Lemma~\ref{#1}}
\newcommand{\defref}[1]{Definition~\ref{#1}}
\newcommand{\thmref}[1]{Theorem~\ref{#1}}
\newcommand{\eqnref}[1]{Eqn. (\ref{#1})}
\newcommand{\algref}[1]{Alg.~\ref{#1}}
\renewcommand{\hat}{\widehat}
\renewcommand{\frac}{\tfrac}
\title{Towards Domain Adaptive Neural Contextual Bandits}
\author{Ziyan Wang$^1$, Xiaoming Huo$^1$, Hao Wang$^2$\\
% Department of Computer Science\\
Georgia Institute of Technology$^1$ \ \ Rutgers University$^2$\\
\texttt{\{wzy,huo\}@gatech.edu}$^1$ \ \ \texttt{hw488@cs.rutgers.edu}$^2$\\
}
\begin{document}

\maketitle

\begin{abstract}
Contextual bandit algorithms are essential for solving real-world decision making problems. In practice, collecting a contextual bandit's feedback from different domains may involve different costs. For example, measuring drug reaction from mice (as a source domain) and humans (as a target domain). Unfortunately, adapting a contextual bandit algorithm from a source domain to a target domain with distribution shift still remains a major challenge and largely unexplored. In this paper, we introduce the first general domain adaptation method for contextual bandits. Our approach learns a bandit model for the target domain by collecting feedback from the source domain. Our theoretical analysis shows that our algorithm maintains a sub-linear regret bound even adapting across domains. Empirical results show that our approach outperforms the state-of-the-art contextual bandit algorithms on real-world datasets. Code will soon be available at \url{https://github.com/Wang-ML-Lab/DABand}.
\end{abstract}

\section{Introduction} \label{sec:intro}
Contextual bandit (CB) algorithms have shown great promise for naturally handling exploration/exploitation trade-off problems with optimal statistical properties. Notably, LinUCB~\citep{li2010contextual} and its various adaptations~\citep{LSB,taming,CFBandit,LRBandit,practical,korda2016distributed,mahadik2020fast,NeuralUCB}, have been shown to be able learn the optimal strategy when all data come from the same domain. However, these methods fall short when applied to data from a new domain. For example, a drug reaction prediction model trained by collecting feedback from mice (the source domain) may not work for humans (the target domain). 

{So, how does one effectively explore a high-cost target domain by only collecting feedback from a low-cost source domain, e.g., exploring drug reaction in humans by collecting feedback from mice or exploring real-world environments by collecting feedback from simulated environments?} 
{The challenge of effective cross-domain exploration is multifaceted: (1) The need for \emph{effective exploration} in both the source and target domains depends on \emph{the quality of representations} learned in the source domain, which in turn requires \emph{effective exploration}; this leads to a chicken-and-egg dilemma. (2) Aligning source-domain representations with target-domain representations is nontrivial in bandit settings, where ground-truth may still be unknown if the action is incorrect. (3) Balancing these aspects -- effective exploration and accurate alignment in bandit settings -- is also nontrivial. }

To address these challenges, as the first step, we allow our method to simultaneously perform \emph{effective exploration} and \emph{representation alignment}, leveraging unlabeled data from both the source and target domains. Interestingly, our theoretical analysis reveals that naively doing so leads to sub-optimal accuracy/regret (verified by our empirical results) and naturally leads to additional terms in the target-domain regret bound. We then follow the regret bound derived from our analysis to develop an algorithm that adaptively collect feedback from the source domain while aligning representations from the source and target domains. 
Our contributions are outlined as follows:
\begin{itemize}[nosep,leftmargin=20pt]
    \item We identify the problem of contextual bandits across domains and propose domain-adaptive contextual bandits (DABand) as a the first general method to explore a high-cost target domain while only collecting feedback from a low-cost source domain. 
    \item Our theoretical analysis shows that our method can achieve a sub-linear regret bound in the target domain. 
    \item Our empirical results on real-world datasets show our DABand significantly improve performance over the state-of-the-art contextual bandit methods when adapting across domains. 
\end{itemize}

\section{Related Work}
\textbf{Contextual Bandits.} In the realm of adaptive decision-making, contextual bandit algorithms, epitomized by LinUCB~\citep{li2010contextual}, have carved a niche in efficiently balancing the exploitation-exploration paradigm leveraging estimation of reward and uncertainty~\citep{REN,NPN}; they have been influential across a spectrum of use cases, notably in complex adaptive systems such as recommender systems~\citep{li2010contextual,REN}. These algorithmic frameworks, along with their myriad adaptations~\citep{LSB,taming,CFBandit,korda2016distributed,LRBandit,practical,NeuralUCB,mahadik2020fast,xu2020neural}, have decisively outperformed conventional bandit models that operate devoid of contextual awareness~\citep{LinRel}. This superiority is underpinned by theoretical models, paralleling the insights in~\citep{LinRel}, where LinUCB variants are validated to conform to optimal regret boundaries in targeted scenarios~\citep{chu2011contextual}. However, a discernible gap in these methodologies is their reduced efficacy in a new domain, particularly when getting feedback involves high cost. 
Our proposed DABand bridges this gap by adeptly aligning both source-domain and target-domain representations in the latent space, thereby reducing performance drop when transfer across different domains.

\textbf{Domain Adaptation.} The landscape of domain adaptation has been extensively explored, as evidenced by a breadth of research~\citep{TLSurvey,MMD,CDAN,MCD,GTA,MDD,moment,blending,domain_bridge,wang2020continuously,DA_distillation,VDI}. The primary objective of these studies has been the alignment of source and target domain distributions to facilitate the effective generalization of models trained on labeled source data to unlabeled target data. This alignment is conventionally attained either through the direct matching of distributional statistics~\citep{MMD,DDC,CORAL,moment,DA_distillation} or via the integration of an adversarial loss~\citep{DANN,CDANN,ADDA,MDD,UDA-SGD,blending,domain_bridge,wang2020continuously,GRDA,TSDA,UDIL}. The latter, known as adversarial domain adaptation, has surged in prominence, bolstered by its theoretical foundation~\citep{GAN,AMSDA,MDD,InvariantDA}, the adaptability of end-to-end training in neural network architectures, and its empirical efficacy. However, these methods only work in offline settings, and assumes complete observability of labels in the source domain. Therefore they are not applicable to our online bandit settings.

\textbf{Distribution Shift in Bandits.}
{A related study addresses the challenge of policy learning in contextual bandits using historical observational data, particularly when there is a shift in the environment~\citep{si2023distributionally}. The authors propose a distributionally robust approach to policy evaluation and learning, ensuring that the learned policy remains effective even under worst-case environmental shifts. However, their focus is distinct from ours. Their goal is to ensure policy robustness against environmental shifts between training and deployment within the same domain, whereas our proposed DABand focuses on adapting policies from a source domain to a different target domain, particularly when direct feedback from the target domain is limited or costly.}

\textbf{Domain Adaptation Related to Bandits.} 
There is also work related to both domain adaptation and bandits. Specifically, \citet{guo2020multi} propose a domain adaptation method using a bandit algorithm to select which domain to use during training. 

We note that their goal and setting is different from our DABand's. 
(1) \citet{guo2020multi} focuses on improving accuracy in a typical, offline domain adaptation setting. In contrast, our DABand focuses on minimizing regret in an online bandit setting. (2) \citet{guo2020multi} assumes complete access to ground-truth labels in the source domain, while in our bandit setting, ground-truth may still be unknown if the action is incorrect. Therefore their work is not applicable to our setting. 

\section{Theory} \label{sec:theory}
In this section, we formalize the problem of contextual bandits across domains and derive the corresponding regret bound. We then develop our DABand inspired by this bound in~\secref{sec:method}. \textbf{All proofs of lemmas, theorems can be found in the Appendix.}

\textbf{Notation.} We use $[k]$ to denote the set $\{1,2,\cdots,k\}$, for $k \in \mathbb{N}^{+}$. The Euclidean norm of a vector $\x \in \mathbb{R}^{d}$ is denoted as $\|\x\|_{2} = \sqrt{\x^{T} \x}$. We denote the operator norm and Frobenius norm of a matrix $\W \in \mathbb{R}^{m \times n}$ as $\|W\|$ and $\|W\|_{F}$, respectively. Given a semi-definite matrix $A \in \mathbb{R}^{d \times d}$ and a vector $\x \in \mathbb{R}^{d}$, we denote the Mahalanobis norm as $\|\x\|_{A} = \sqrt{\x^{T} A \x}$.  
For a function $\mathit{f}(T)$ of a parameter $T$, we denote as $\mathcal{O} (\mathit{f}(T))$ the terms growing in the order of $\mathit{f}(T)$, ignoring constant factors. We assume all the action spaces are identical and with cardinality of $K$, i.e., $|\mathcal{A}_1| = |\mathcal{A}_2| = \cdots = |\mathcal{A}_N| = K$. We use $\langle \cdot,\cdot \rangle$ to denote the inner product of two vectors. We denote as $a_{\DM,i}$ the action $i$ in domain $\DM$, and omit the subscript $\DM$ when the context is clear, e.g., $x_{i, a_i^*}^{\mathcal{D}} \equiv x_{i, a_{\DM,i}^*}^{\mathcal{D}}$. 

% \red{Furthermore, to make the context clear, we omit some superscripts if we mention it in the super scope. i.e., $x_{i, a_i^*}^{\mathcal{D}} \equiv x_{i, [a_i^*]^{\mathcal{D}}}^{\mathcal{D}}$.}

% \red{Furthermore, to make the context clear, we omit some superscripts if we mention it in the super scope. i.e., $x_{i, a_i^*}^{\mathcal{D}} \equiv x_{i, [a_i^*]^{\mathcal{D}}}^{\mathcal{D}}$.}

\subsection{Preliminaries} \label{sec:prelim}
\textbf{Typical Contextual Bandit Setting.} 
Suppose we have $N$ samples from an unknown domain $\mathcal{D}$, which is represented as $\{ \x_{i, a}^{\mathcal{D}}, a_{i}^{*} \}_{i \in [N], a \in [K]}$. 
Here, $\x_{i, a} \in \mathcal{D}$ denotes the context for action $a \in \mathcal{A}_i$, and $a_{i}^{*} \in \mathcal{A}_i$ is the ground-truth action that will receive the optimal reward. We denote as $r_{i,a}$ the received reward in round $i$ after performing action $a$ and $\hat{a}_{i} \in \mathcal{A}_i$ the chose action by a bandit algorithm in round $i$. The goal in typical contextual bandits is to learn a policy of choosing actions $\hat{a}_{i}$ in each round to minimize the regret after $N$ rounds:
\begin{align}
R = \sum\nolimits_{i=1}^{N} r_{i, a_i^*} - \sum\nolimits_{i=1}^{N} r_{i, \hat{a}_i}. \label{eq:regret}
\end{align}
\textbf{LinUCB.} 
LinUCB~\citep{li2010contextual} is a classic method for the typical contextual bandit setting. It assumes that the reward is linear to its input context, i.e., $\x_{i, a}$. During each round $i$, the agent selects an arm according to historical contexts and their corresponding rewards, $r_{i,a}$:
\begin{align*}
    \hat{a}_i = \argmax_{a \in \mathcal{A}_{i}} \left\{ \x_{i, a}^{T} \theta_{i} + \alpha \left\| \x_{i,a} \right\|_{A_{i-1}^{-1}} \right\},
\end{align*}
where $A_{i-1} = \gamma \I_{d} + \sum\nolimits_{j=1}^{i-1} \x_{j, \hat{a}_{j}} \x_{j, \hat{a}_{j}}^{T}$ with $\gamma > 0$ and $\x_{j, \hat{a}_{j}}^{T}$ as the historical {selected} action's context, $\x_{i, a}$ is the context for the candidate action $a$ in round $i$, $\theta_{i}$ is the bandit parameter in round $i$ and $\alpha > 0$ is a hyperparameter that adjusts the exploration rate. 

\textbf{LinUCB with Representation Learning.} 
In this paper, we use Neural LinUCB~\citep{xu2020neural,REN} as a backbone model to handle high-dimensional context with representation learning. 
We assume that there exist a ground-truth encoder $\phi^*$ to encode the raw context $\x_{i,a}$ into a latent-space representation (encoding) $\phi^*(\x_{i,a})$. 
Subsequently, the reward for the context $\x_{i,a}$ is then $r(\x_{i,a})=\langle \theta^*, \phi^*(\x_{i,a}) \rangle$ plus some stochastic noise $\epsilon$. Furthermore, we use the same setting as in~\cite{xu2020neural} where the ground-truth rewards are restricted to the range of $[0,1]$. Additionally, we further bound predicted rewards by normalizing $\| \hat{\theta} \| = 1$ and $\| \hat{\phi} (\x_{i,a}) \| = 1$.

In the single-domain typical setting, the goal is to learn an encoder $\hat{\phi}$ and the contextual bandit parameter $\hat{\theta}$, such that our estimated reward $\hat{{r}} (\x_{i, a}) = \langle \hat{\theta}, \hat{\phi} (\x_{i, a}) \rangle$ can be close to the ground-truth reward $r(\x_{i,a})$. One can then use this to form derive policies to minimize the regret in~\eqnref{eq:regret}.

For simplicity, in~\defref{def:labeling} below, we further denote as $f_{\mathcal{D}}$ the (ground-truth) labeling function for domain $\mathcal{D}$ and as $h \in \mathcal{H}$ a hypothesis such that $f$ is parameterized by $\phi_{\mathcal{D}}^{*}, \theta_{\mathcal{D}}^{*}$, and $h$ is parameterized by $\hat{\phi}, \hat{\theta}$.

\subsection{Our Cross-Domain Contextual Bandit Setting}
Typical contextual bandits operate in a single domain $\DM$. In contrast, our cross-domain bandit setting involves a source domain $\mathcal{S}$ and a target domain $\mathcal{T}$. In this setting, 
one can only collect feedback (reward) from the source domain, but not from the target domain. Specifically: (1) We assume a low-cost \emph{source domain} (experiments on mice), where for each round $i$, one has access to the contexts $\{ \x_{i, a}^{\mathcal{S}}\}_{a \in [K]}$ for each candidate actions, chooses one action $\hat{a}_i$, and receives reward $r^{\SM}_{i,\hat{a}_i}$. (2) Additionally, we assume a high-cost \emph{target domain} where collecting feedback (reward) is expensive (experiments on humans); therefore, one only has access to the contexts $\{ \x_{i, a}^{\mathcal{T}}\}_{a \in [K]}$ for each candidate actions for each round, but \textbf{cannot collect feedback (reward) $r^{\TM}_{i,\hat{a}_i}$ for any action in the target domain}. 
The goal in our cross-domain contextual bandits is to learn a policy of choosing actions $\hat{a}_{i}$ in the target domain to minimize the target-domain \textbf{zero-shot regret} for $N$ (hypothetical) future rounds:
\begin{align}
R_{\TM} = \sum\nolimits_{i=1}^{N} r^{\TM}_{i, a_i^*} - \sum\nolimits_{i=1}^{N} r^{\TM}_{i, \hat{a}_i}. \label{eq:target_regret}
\end{align}
\textbf{Difference between \eqnref{eq:regret} and \eqnref{eq:target_regret}.} 
In the typical setting, \eqnref{eq:regret} uses different updated policies for each round; this is also true for the source domain. In contrast, the target regret in \eqnref{eq:target_regret} use the same fixed policy obtained from the source domain for all $N$ (hypothetical) rounds because 
one cannot collect feedback from the target domain. See the difference between \defref{def:source_regret} and \defref{def:target_regret} below for a more formal comparison. 

\subsection{Formal Definitions of Error}
\begin{definition}[\textbf{Labeling Function and Hypothesis}]\label{def:labeling}
We define the labeling function $f_{\mathcal{D}}$ and the hypothesis $h$ for domain $\DM$ as follows:
\begin{align*}
    f_{\mathcal{D}} (\x_{i, a}^{\mathcal{D}}) &= {r} (\x_{i, a}^{\mathcal{D}}) = \langle \theta_{\mathcal{D}}^{*}, \phi_{\mathcal{D}}^{*} (\x_{i, a}^{\mathcal{D}}) \rangle + \epsilon_i \\
    h (\x_{i, a}^{\mathcal{D}}) &= \hat{{r}} (\x_{i, a}^{\mathcal{D}}) = \langle \hat{\theta}, \hat{\phi} (\x_{i, a}^{\mathcal{D}}) \rangle + \epsilon_i
\end{align*}
where $\theta_{\mathcal{D}}^{*}$ is the optimal predictor for domain ${\mathcal{D}}$, $\hat{\theta}$ denotes the estimated predictor, and $\epsilon_i$ is the random noise.
\end{definition}
Next, we analyze how well our hypothesis $h$ estimates the labeling function $f$ (i.e., ground-truth hypothesis). \defref{def:err_2_model} below quantifies the closeness between two hypotheses by calculating the absolute difference in their estimated rewards.
\begin{definition}[\textbf{Estimated Error between Two Hypotheses}] \label{def:err_2_model}
{Assuming all contexts $\x_{i, \hat{a}_i}^{\mathcal{D}}$ are from domain $\mathcal{D}$,} the error between two hypotheses $h_1, h_2 \in \mathcal{H}$ on domain ${\mathcal{D}}$ given selected actions $\{\hat{a}_i\}_{i \in [N]} \in [K]$ is
\begingroup\makeatletter\def\f@size{9}\check@mathfonts
\def\maketag@@@#1{\hbox{\m@th\normalsize\normalfont#1}}%
\begin{align}
    \epsilon_{\mathcal{D}} (h_1, h_2) &= \sum\nolimits_{i=1}^{N} \Bigg( \big| h_1 (\x_{i, \hat{a}_i}^{\mathcal{D}} ) - h_2 (\x_{i, \hat{a}_i}^{\mathcal{D}} ) \big| \Bigg) = \sum\nolimits_{i=1}^{N} \Bigg( \Big| \langle \hat{\theta}_1, \hat{\phi}_1 (\x_{i, \hat{a}_i}^{\mathcal{D}} ) \rangle - \langle \hat{\theta}_2, \hat{\phi}_2 (\x_{i, \hat{a}_i}^{\mathcal{D}} ) \rangle \Big| \Bigg). \label{eq:error}
\end{align}
\endgroup
\end{definition}
Note that the domain $\DM$ above can be the source domain $\SM$ or the target domain $\TM$. We then define the error of our estimated hypothesis in these domains. 

\begin{definition}[\textbf{Source- and Target-Domain Error}] 
\label{def: Train_error}
With $N$ samples from the source domain $\mathcal{S}$, and $f_{S}$ denoting the labeling function for $\SM$, the source-domain error is then
\begingroup\makeatletter\def\f@size{9}\check@mathfonts
\def\maketag@@@#1{\hbox{\m@th\normalsize\normalfont#1}}%
\begin{align}
    \epsilon_{\mathcal{S}} (f_{\mathcal{S}}, h) &= \sum\nolimits_{i=1}^{N} \Bigg( \big| f_{S}(\x_{i, \hat{a}_i}^{\mathcal{S}}) - h(\x_{i, \hat{a}_i}^{\mathcal{S}}) \big| \Bigg) = \sum\nolimits_{i=1}^{N} \Bigg( \Big| \langle \theta_{\mathcal{S}}^{*}, \phi_{\mathcal{S}}^{*} (\x_{i, \hat{a}_i}^{\mathcal{S}} ) \rangle - \langle \hat{\theta}, \hat{\phi} (\x_{i, \hat{a}_i}^{\mathcal{S}} ) \rangle \Big| \Bigg), \nonumber
\end{align}
\endgroup
where $\x_{i, \cdot}^{\mathcal{S}}$ comes from $\mathcal{S}$. Furthermore, $\hat{a}_i = \mbox{arg} \max_{a} h(\x_{i, a}^{\mathcal{S}})$ and $a_{i}^{*} = \mbox{arg} \max_{a} f_{S}(\x_{i, a}^{\mathcal{S}})$. For simplicity, we shorten the notation $\epsilon_{\mathcal{S}} (f_{\mathcal{S}}, h)$ to $\epsilon_{\mathcal{S}} (h)$ and similarly use $\epsilon_{\mathcal{T}} (h)$ for domain $\mathcal{T}$. Assuming $\x_{i, \cdot}^{\mathcal{T}}$ comes from $\mathcal {\mathcal{T}}$, the estimated error for the target domain is then: 
\begingroup\makeatletter\def\f@size{9}\check@mathfonts
\def\maketag@@@#1{\hbox{\m@th\normalsize\normalfont#1}}%
\begin{align}
    \epsilon_{\mathcal{T}} (f_{\mathcal{T}}, h) &= \sum\nolimits_{i=1}^{N} \Bigg( \big| f_{\mathcal{T}}(\x_{i, \hat{a}_i}^{\mathcal{T}}) - h(\x_{i, \hat{a}_i}^{\mathcal{T}}) \big| \Bigg) = \sum\nolimits_{i=1}^{N} \Bigg( \Big| \langle \theta_{\mathcal{T}}^{*}, \phi_{\mathcal{T}}^{*} (\x_{i, \hat{a}_i}^{\mathcal{T}} ) \rangle - \langle \hat{\theta}, \hat{\phi} (\x_{i, \hat{a}_i} ^{\mathcal{T}}) \rangle \Big| \Bigg). \nonumber
\end{align}
\endgroup
\end{definition}

\subsection{Source Regret and Target Regret}
Below define the regret for the source and target domains. 
\begin{definition}[\textbf{Source Regret}] \label{def:source_regret}
{Assuming $\x_{i, \cdot}^{\mathcal{S}}$ comes from domain $\mathcal{S}$, the source regret (i.e., the regret in the source domain) is}
\begingroup\makeatletter\def\f@size{9}\check@mathfonts
\def\maketag@@@#1{\hbox{\m@th\normalsize\normalfont#1}}%
\begin{align}
    \mathit{R}_{\mathcal{S}} &= \sum\nolimits_{i=1}^{N} \Bigg( f_{\mathcal{S}}(\x_{i, a_i^{*}}^{\mathcal{S}}) - f_{\mathcal{S}}(\x_{i, \hat{a}_i} ^{\mathcal{S}})  \Bigg) = \sum\nolimits_{i=1}^{N} \Bigg(  \langle \theta_{S}^{*}, \phi_{\mathcal{S}}^{*} (\x_{i, a_{i}^{*}}^{\mathcal{S}}) \rangle - \langle \theta_{\mathcal{S}}^{*}, \phi_{S}^{*}(\x_{i, \hat{a}_i} ^{\mathcal{S}}) \rangle \Bigg). \label{eq:source_regret_formal}
\end{align}
\endgroup

\end{definition}
The goal in our cross-domain contextual bandits is to learn a policy of choosing actions $\hat{a}_{i}$ in the high-cost target domain (e.g., human experiments) by collecting feedback only in the source domain (e.g., mouse experiments). Formally, we would like to minimize the target regret as defined below. 
\begin{definition}[\textbf{Target Regret and Problem Formulation}]\label{def:target_regret}
Denoting the estimated hypothesis as $\hat{h} = \left\{\hat{\phi}, \hat{\theta} \right\}$, the target regret we aim to minimize is defined as
\begingroup\makeatletter\def\f@size{9}\check@mathfonts
\def\maketag@@@#1{\hbox{\m@th\normalsize\normalfont#1}}
\begin{align*}
    \mathit{R}_{\mathcal{T}} &= \sum\nolimits_{i=1}^{N} \Bigg( f_{\mathcal{T}}(\x_{i, a_i^{*}}^{\mathcal{T}}) - f_{\mathcal{T}}(\x_{i, \hat{a}_i} ^{\mathcal{T}}) \Bigg)\\
    \mbox{s.t. } \hat{a}_i &= \argmax_{a} \hat{h} (\x_{i, a}^{\mathcal{T}}) + \alpha \| \hat{\phi} (\x_{i, a}^{\mathcal{T}})\|_{[A^{\mathcal{S}}]^{-1}}, \quad
    \hat{h} = \argmin_{h} \epsilon_{\mathcal{S}} (h), 
\end{align*}
\endgroup
where $A^{\mathcal{S}} = \gamma \I + \sum\nolimits_{i=1}^{N} [\hat{\phi} (\x_{i, \hat{a}_i}^{\mathcal{S}})] [\hat{\phi} (\x_{i, \hat{a}_i}^{\mathcal{S}})]^T$ denotes the context matrix accumulated by the selected context features in the source domain. 
\end{definition}

\subsection{Cross-Domain Error Bound for Regression}
{Prior to introducing our final regret bound, it is necessary to define an additional component. A fundamental challenge in general domain adaptation problems is to manage the divergence between source and target domains. Unfortunately, previous domain adaptation theory only covers \emph{classification} tasks~\citep{bendavid,MDD}. In contrast, the problem of contextual bandits is essentially a reward \emph{regression} problem. 
To address this challenge, we introduce the following new definition to bound the error between the source and target domains.}
\begin{definition}[\textbf{$\mathcal{H}\Delta\mathcal{H}$ Hypothesis Space for Regression}]
\label{def:h_delta_h}
For a hypothesis space $\mathcal{H}$, the symmetric difference hypothesis space $\mathcal{H}\Delta\mathcal{H}$ is the set of hypotheses s.t. 
\begin{align*}
    g \in \mathcal{H}\Delta\mathcal{H} \quad \iff \quad g({\x}) =  |h(\x) - h'(\x)|.
\end{align*}
We then can further define the Divergence for $\mathcal{H}\Delta\mathcal{H}$ Hypothesis Space:
\begin{align*}
    \hat{d}_{\mathcal{H}\Delta\mathcal{H}} (\mathcal{S}, \mathcal{T}) &= 2 \enspace \sup_{h, h' \in \mathcal{H}} |\mathbb{E}_{{\x} \sim \mathcal{S}} [ | h({\x}) - h'({\x}) | ] - \mathbb{E}_{x \sim \mathcal{T}} [ | h({\x}) - h'({\x}) | ]|
\end{align*}
\end{definition}
\begin{definition}[\textbf{Optimal Hypothesis}]
\label{def:psi_optimalvalue}
The optimal hypothesis, denoted as $h^*$, can balance the estimated error for both source and target domain. Formally,
\begingroup\makeatletter\def\f@size{10}\check@mathfonts
\def\maketag@@@#1{\hbox{\m@th\normalsize\normalfont#1}}%
\begin{align*}
    h^{*} &= \mbox{arg} \min_{h \in \mathcal{H}} \enspace \epsilon_{\mathcal{S}} (h) + \epsilon_{\mathcal{T}} (h),
\end{align*}
\endgroup
and we define the minimum estimated error for the optimal hypothesis $h^*$ as 
\begingroup\makeatletter\def\f@size{10}\check@mathfonts
\def\maketag@@@#1{\hbox{\m@th\normalsize\normalfont#1}}%
\begin{align*}
   \psi &= \epsilon_{\mathcal{S}} (h^*) + \epsilon_{\mathcal{T}} (h^*).
\end{align*}
\endgroup
\end{definition}
%
% \begin{lemma} \label{lem:d_h_delta_h}
% For any hypotheses $h, h' \in \mathcal{H}$ and $a \in \mathcal{A}$ is selected by either $h$ or $h'$,
% %
% \begin{align}
%     |\epsilon_{\mathcal{S}} (h, h') - \epsilon_{\mathcal{T}} (h, h')| \leq \dfrac{N}{2} \hat{d}_{\mathcal{H}\Delta\mathcal{H}} (\mathcal{S}, \mathcal{T})
% \end{align}
% %
% \end{lemma}
% %
% \begin{lemma}[\textbf{Target-Domain Regression Error Bound}]\label{lem:regression_error}
% Let $\mathcal{H}$ be a hypothesis space. Then for every $h \in \mathcal{H}$:
% %
% \begin{align}
%     \epsilon_{\mathcal{T}} (h) \leq \epsilon_{\mathcal{S}} (h) + \dfrac{N}{2} \hat{d}_{\mathcal{H}\Delta\mathcal{H}} (\mathcal{S}, \mathcal{T}) + \psi,
% \end{align}
% %
% \red{where $\psi$ is defined in~\defref{def:psi_optimalvalue}. }
% \end{lemma}
% \red{\lemref{lem:regression_error} above bounds the target-domain regression error using the source-domain regression error, the distribution divergence between source and target data distributions, and a constant $\psi$. This lays the foundation for bounding the target regret (\defref{def:target_regret}) using the source regret (\defref{def:source_regret}) in \secref{sec:regret_bound} below. }

\subsection{Final Regret Bound}\label{sec:regret_bound}

With all the  definitions of source/target regret in~\defref{def:source_regret} and~\defref{def:target_regret}, we can then bound the target regret using~\thmref{thm:target_regret_bound} below. 

% + \textcolor{Lavender}{\underbrace{\sum^{N}_{i=1} \alpha \cdot \kappa_{i}^{\mathcal{T}} + \mathbbm{1}{[a_{i}^{*} \neq \hat{a}_{i}]} \left( \sum^{N}_{i=1} \alpha \cdot \kappa_{i}^{\mathcal{S}} \right)}_{Small Numbers}}

\begin{theorem}[\textbf{Target Regret Bound}] 
\label{thm:target_regret_bound}
{Denoting the contexts from the source domain $\SM$ as $\{ \x_{t, a}^{\mathcal{S}} \}_{i \in [N], a \in [K]}$, the associated ground-truth action as $\{ a_{i}^{*} \}_{i=1}^{N}$, and the contexts from the target domain $\TM$ as $\{ \x_{t, a}^{\mathcal{T}} \}_{i \in [N], a \in [K]}$, the upper bound for our target regret $R_{\TM}$ is }
\begingroup\makeatletter\def\f@size{9}\check@mathfonts
\def\maketag@@@#1{\hbox{\m@th\normalsize\normalfont#1}}%
\begin{align}
    \mathit{R}_{\mathcal{T}} &\triangleq \sum_{i=1}^{N} \Bigg( \Big| \langle \theta_{\mathcal{T}}^{*}, \phi_{\mathcal{T}}^{*} (\x_{i, a_i^{*}}^{\mathcal{T}}) \rangle - \langle \theta_{\mathcal{T}}^{*}, \phi_{\mathcal{T}}^{*} (\x_{i, \hat{a}_i} ^{\mathcal{T}}) \rangle \Big| \Bigg) \nonumber \\
    &\leq \textcolor{ForestGreen} {\underbrace{\mathit{R}_{\mathcal{S}}}_{Source~Regret}} + \textcolor{OrangeRed} {\underbrace{2 \cdot \epsilon_{\mathcal{S}} (h)}_{Regression~Error}} + \textcolor{RoyalBlue} {\underbrace{ N \cdot \hat{d}_{\mathcal{H}\Delta\mathcal{H}} (\mathcal{S}, \mathcal{T})}_{Data~Divergence}} + \textcolor{BlueGreen} {\underbrace{\psi + C}_{Constant}} \nonumber \\
    &\quad + \textcolor{YellowOrange} {\underbrace{\sum^{N}_{i=1} \left( \left| \langle \hat{\theta}, \hat{\phi} (\x_{i, \hat{a}_{i}}^{\mathcal{T}}) \rangle \right| \right) + \sum^{N}_{i=1} \mathbbm{1}{[a_{i}^{*} \neq \hat{a}_{i}]} \left( \left| \langle \hat{\theta}, \hat{\phi} (\x_{i, \hat{a}_{i}}^{\mathcal{S}}) \rangle \right| \right) }_{Predicted~Rewards}}, \label{eq:total_bound}
\end{align}
\endgroup
{{where $\mathit{R}_{\mathcal{S}}$, $\epsilon_{\mathcal{S}} (h)$, and $\hat{d}_{\mathcal{H}\Delta\mathcal{H}}$ are the source regret, source-domain error, and ${\mathcal{H}\Delta\mathcal{H}}$ divergence defined in~\defref{def:source_regret}, ~\defref{def: Train_error}, and \defref{def:h_delta_h}, respectively. $\psi$ is a constant independent to the problem and and $C$ is a constant which can be ignored (see the Appendix for more details) .} } 
\end{theorem}
{Here we provide several observations (for more analysis, please \textbf{refer to the Appendix}): %我加了个加粗
\begin{itemize}[nosep,leftmargin=20pt]
\item[(1)] Since $R_\SM$ is sub-linear w.r.t. the number of samples in the source domain, so is $R_\TM$. 
\item[(2)] \thmref{thm:target_regret_bound} bounds the target regret using the source regret, enabling the exploration of the target domain by only collecting feedback (reward) from the source domain.
{\item[(3)] Typical generalization bounds in DA naively minimize the source regret $R_{\SM}$ while aligning source and target data in the latent space (i.e., minimizing $\hat{d}_{\mathcal{H}\Delta\mathcal{H}}$). This is \textbf{not sufficient}, as we need two additional terms, i.e., the regression error $\epsilon_{\mathcal{S}} (h)$ and the predicted reward $\sum^{N}_{i=1} \left( \left| \langle \hat{\theta}, \hat{\phi} (\x_{i, \hat{a}_{i}}^{\mathcal{T}}) \rangle \right| \right) + \sum^{N}_{i=1} \mathbbm{1}{[a_{i}^{*} \neq \hat{a}_{i}]} \left( \left| \langle \hat{\theta}, \hat{\phi} (\x_{i, \hat{a}_{i}}^{\mathcal{S}}) \rangle \right| \right)$, as regularization terms. }
% \red{Compared to the generalization bound in DA,} naively minimizing the source regret $R_{\SM}$ while aligning source and target data in the latent space (i.e., minimizing $\hat{d}_{\mathcal{H}\Delta\mathcal{H}}$) is not sufficient, as we need two additional terms, i.e., the regression error $\epsilon_{\mathcal{S}} (h)$ and the predicted reward $\sum\nolimits_{i=1}^{N} \Big| \langle \hat{\theta}, \hat{\phi} (\x_{i, \hat{a}_i} ^{\mathcal{S}}) \rangle \Big| $, as regularization terms. 
\item[(4)] Minimizing the regression error $\epsilon_{\mathcal{S}} (h)$ encourages accurate prediction of source rewards. 
{\item[(5)] Minimizing the predicted reward $\sum^{N}_{i=1} \left( \left| \langle \hat{\theta}, \hat{\phi} (\x_{i, \hat{a}_{i}}^{\mathcal{T}}) \rangle \right| \right) + \sum^{N}_{i=1} \mathbbm{1}{[a_{i}^{*} \neq \hat{a}_{i}]} \left( \left| \langle \hat{\theta}, \hat{\phi} (\x_{i, \hat{a}_{i}}^{\mathcal{S}}) \rangle \right| \right)$ regularizes the model to avoid overestimating rewards.}
\end{itemize}
% (1) Since $R_\SM$ is sub-linear w.r.t. the number of samples in the source domain, so is $R_\TM$. (2) \thmref{thm:target_regret_bound} bounds the target regret using the source regret, enabling the exploration of the target domain by only collecting feedback (reward) from the source domain. (3) \red{Compared to the generalization bound in DA,} naively minimizing the source regret $R_{\SM}$ while aligning source and target data in the latent space (i.e., minimizing $\hat{d}_{\mathcal{H}\Delta\mathcal{H}}$) is not sufficient, as we need two additional terms, i.e., the regression error $\epsilon_{\mathcal{S}} (h)$ and the predicted reward $\sum\nolimits_{i=1}^{N} \Big| \langle \hat{\theta}, \hat{\phi} (\x_{i, \hat{a}_i} ^{\mathcal{S}}) \rangle \Big| $, as regularization terms. (4) Minimizing the regression error $\epsilon_{\mathcal{S}} (h)$ encourages accurate prediction of source rewards. (5) Minimizing the predicted reward $\sum\nolimits_{i=1}^{N}  \Big| \langle \hat{\theta}, \hat{\phi} (\x_{i, \hat{a}_i} ^{\mathcal{S}}) \rangle \Big| $ regularizes the model to avoid overestimating rewards. 
{Note that~\thmref{thm:target_regret_bound} is \textbf{nontrivial}. While it does resemble the generalization bound in domain adaptation, there are key differences. As mentioned in Observation (3) above, our target regret bound includes two additional crucial terms not found in domain adaptation.} Specifically:
\begin{itemize}[nosep,leftmargin=20pt]
    \item \textbf{Regression Error in the Source Domain.} $\sum\nolimits_{i=1}^{N} \Bigg( \Big| \langle \theta_{\mathcal{S}}^{*}, \phi_{\mathcal{S}}^{*} (x_{i, \hat{a}_i}^{\mathcal{S}} ) \rangle - \langle \hat{\theta}, \hat{\phi} (x_{i, \hat{a}_i}^{\mathcal{S}} ) \rangle \Big| \Bigg)$ (i.e., $\epsilon_{\mathcal{S}} (h)$ in~\thmref{thm:target_regret_bound}) defines the difference between the true reward from selecting action $\hat{a}_i$ and the estimated reward for this action. 
    {\item \textbf{Predicted Reward.} $\sum^{N}_{i=1} \left( \left| \langle \hat{\theta}, \hat{\phi} (\x_{i, \hat{a}_{i}}^{\mathcal{T}}) \rangle \right| \right) + \sum^{N}_{i=1} \mathbbm{1}{[a_{i}^{*} \neq \hat{a}_{i}]} \left( \left| \langle \hat{\theta}, \hat{\phi} (\x_{i, \hat{a}_{i}}^{\mathcal{S}}) \rangle \right| \right)$ serves as a regularization term to regularize the model to avoid overestimating rewards.}
\end{itemize}
The results of the ablation study in Table~\ref{table:ablation} in~\secref{sec:res} highlight the significance of these two terms. {See~\appref{sec:novel} for more discussion on novelty as well as key differences between our DABand and classic domain adaptation.} % in the Appendix. 

\section{Method}\label{sec:method}
With \thmref{thm:target_regret_bound}, we can then design our DABand algorithm to obtain optimal target regret. We discuss how to translate each term in~\eqnref{eq:total_bound} to a differential loss term in~\secref{sec:term} and then put them together in~\secref{sec:together}. 

\subsection{From Theory to Practice: Translating the Bound in~\eqnref{eq:total_bound} to Differentiable Loss Terms}\label{sec:term}

\textcolor{ForestGreen}{\textbf{Source Regret.}} 
Inspired by Neural-LinUCB~\citep{xu2020neural}, we use LinUCB~\citep{li2010contextual} to update $\hat{\theta}$ and the following empirical loss function when updating the encoder parameters $\hat{\phi}$ by back-propagation in round $i$: 
\begingroup\makeatletter\def\f@size{10}\check@mathfonts
\def\maketag@@@#1{\hbox{\m@th\normalsize\normalfont#1}}%
\begin{align}
    \mathcal{L}_{i}^{\mathit{R}_{\mathcal{S}}} = \sum\nolimits_{a=1}^{K} \left( \hat{\theta}^{T} \hat{\phi}(\x_{i, a}) - {r} (\x_{i, a}) \right)^{2},\label{eq:loss_rs}
\end{align}
\endgroup
where $\theta$ is the contextual bandit parameter, and $\hat{\phi}$ is the encoder shared by the source and target domains, i.e., we set $\hat{\phi}_{\SM}=\hat{\phi}_{\TM}$ during training. 

\textbf{Insights.} From~\eqnref{eq:loss_rs}, a fundamental difference between classification problems and bandit problems becomes apparent. In classification tasks, both the ground-truth label and the estimated label are accessible. However, in bandit problems, we only know the estimated label and reward. If the estimated rewards align with our estimated label's correctness, it suggests knowledge of the ground-truth label for that feature. If not, we can only deduce that the estimated label is inaccurate, without identifying the correct label among the other candidates. This uncertainty significantly amplifies the complexity of contextual bandit problems.

\begin{algorithm}[t]
\caption{DABand Training Algorithm}\label{alg:DABand}
\begin{algorithmic}[1]
\STATE {\bfseries Input:} regularization parameter $\gamma > 0$, number of total steps $N$, episode length $H$, exploration parameters $\alpha$.\\
\STATE {\bfseries Output:} parameters of the model: $\hat{\phi}_N, \hat{\theta}_N, A_N, b_N$.\\
\STATE{\bfseries Initialization:} $A_0 = \gamma \I, b_0 = \0$, and for $\hat{\theta}_0$ and $\hat{\phi}_0$ are initialized following~\citep{xu2020neural}.\\
\FOR{$i=1,...,N$}
\STATE{Obtain $ \left\{ \x_{i, a}^{\mathcal{S}} \right\}_{a \in [K]}$ and $\left\{ \x_{i, a}^{\mathcal{T}} \right\}_{a \in [K]}$ from the source and the target domain, respectively.}
\STATE{Choose an action $\hat{a}_i = \argmax_{a \in [K]} \left[ \hat{\phi}_{i-1} (\x_{i, a}^{\mathcal{S}}) \right] \hat{\theta}_{i-1} + \alpha \left\| \left[ \hat{\phi}_{i-1} (\x_{i, a}^{\mathcal{S}}) \right] \right\|_{A_{i-1}^{-1}}$.}
\STATE{Get the reward $r_i = \mathbf{r} (\x_{i, \hat{a}_i})$ based on the selected action $\hat{a}_i$.}
\STATE{Update bandit parameters:}
\STATE{$
\begin{cases}
    A_i = A_{i-1} + \left[ \hat{\phi}_{i-1} (\x_{i, \hat{a}_i}^{\mathcal{S}}) \right] \left[ \hat{\phi}_{i-1} (\x_{i, \hat{a}_i}^{\mathcal{S}}) \right]^{T}, \\
    b_i = b_{i-1} + r_i \left[ \hat{\phi}_{i-1} (\x_{i, \hat{a}_i}^{\mathcal{S}}) \right], \\
    \hat{\theta}_{i} = A_{i}^{-1} b_i
\end{cases}$}
\IF{$\mod (i, H) = 0$}
\FOR{$j=1,...,H$}
\STATE{Calculate $\mathcal{L}^{DABand}$ in~\eqnref{eq:final_obj}.}
\STATE{{Use the Adam optimizer~\citep{diederik2014adam} to} update encoder $\hat{\phi}_{i}$ and discriminator $g$ by back-propagation to solve the minimax optimization in~\eqnref{eq:minimax}. }
\ENDFOR
% \STATE{$\hat{\phi}_{i} = $ output of~\algref{alg:DABand_update}}
\ELSE
\STATE{$\hat{\phi}_{i} = \hat{\phi}_{i-1}$}
\ENDIF
\ENDFOR
\STATE {\bfseries Output:} $\hat{\phi}_N, \hat{\theta}_N, A_N, b_N$.
\end{algorithmic}
\end{algorithm}
%
% \begin{algorithm}[t]
% \caption{Update Encoder $\hat{\phi}$ for DABand}\label{alg:DABand_update}
% \begin{algorithmic}[1]
% \STATE {\bfseries Input:} Encoder $\hat{\phi}$ and discriminator $g$, latent representations for both source $\hat{\phi}_{i-1} (\x_{i, a}^{\mathcal{S}})$ and target domain $\hat{\phi}_{i-1} (\x_{i, a}^{\mathcal{T}})$.\\
% \FOR{$j=1,...,H$}
% \STATE{Calculate $\mathcal{L}^{DABand}$ in~\eqnref{eq:final_obj}.}
% \STATE{Update encoder $\hat{\phi}$ and discriminator $g$ by back-propagation to solve the minimax optimization in~\eqnref{eq:minimax}. }
% \ENDFOR
% \STATE {\bfseries Output:} $\hat{\phi}$.
% \end{algorithmic}
% \end{algorithm}
% %
%

\textcolor{OrangeRed}{\textbf{Regression Error.}} We use $L_1$ loss to optimize the regression error, defined as the difference between the true reward from selecting action $\hat{a}_i$ and the estimated reward for this action (see~\defref{def: Train_error} for a detailed explanation of this regression error). Minimizing the \emph{regression error} term directly \emph{encourages accurate prediction of source rewards} using $L_1$ loss. Specifically, we use
\begingroup\makeatletter\def\f@size{10}\check@mathfonts
\def\maketag@@@#1{\hbox{\m@th\normalsize\normalfont#1}}%
\begin{align}
    \mathcal{L}_{i}^{\epsilon_{\mathcal{S}} (h)} = \left | \langle \theta_{\mathcal{S}}^{*}, \phi_{\mathcal{S}}^{*} (\x_{i, \hat{a}_i}^{\mathcal{S}} ) \rangle - \langle \hat{\theta}, \hat{\phi} (\x_{i, \hat{a}_i}^{\mathcal{S}} ) \rangle \right |, \label{eq:tr_err}
\end{align}
\endgroup
where $\langle \theta_{\mathcal{S}}^{*}, \phi_{\mathcal{S}}^{*} (\x_{i, \hat{a}_i}^{\mathcal{S}} ) \rangle$ is the reward received by the agent in the source domain. 
Minimizing the regression error term above directly encourages accurate prediction of source rewards. % using $L_1$ loss. 
% 
% \textbf{Insights.} Minimizing the regression error term above directly encourages accurate prediction of source rewards using $L_1$ loss. 

\textcolor{RoyalBlue}{\textbf{Data Divergence.}} 
The data divergence term in~\eqnref{eq:total_bound} leads to following loss term:
\begingroup\makeatletter\def\f@size{9}\check@mathfonts
\def\maketag@@@#1{\hbox{\m@th\normalsize\normalfont#1}}%
\begin{align}
    \mathcal{L}^{div} = 
    \max_{g} N \cdot (\mathbb{E}^{\mathcal{S}} [\mathcal{L}_{D} (g (\hat{\phi}(\x)), 0 ) ]  + \mathbb{E}^{\mathcal{T}} [ \mathcal{L}_{D} ( g(\hat{\phi}(\x)) , 1 )]), \label{eq:div}
\end{align}
\endgroup
where $\mathbb{E}^{\mathcal{S}}$ and $\mathbb{E}^{\mathcal{T}}$ denote expectations over the data distributions of $(\x, a)$ in the source and target domains, respectively. $g$ is a discriminator that classifies whether $\x$ is from the source domain or the target domain. $\mathcal{L}_{D}$ is the binary classification accuracy, where labels $0$ and $1$ indicate the source and target domains, respectively. In practice, we use the cross-entropy loss as a differentiable surrogate loss. 
As in~\cite{DANN}, solving the minimax optimization $\min\nolimits_{\hat{\phi}}\max\nolimits_g \LM^{div}$ is equivalent to aligning source- and target-domain data distributions in the latent (encoding) space induced by the encoder $\hat{\phi}$.
% 
% \textbf{Insights.} 
% As in~\cite{DANN}, solving the minimax optimization 
% \begin{align*}
% \min\nolimits_{\hat{\phi}}\max\nolimits_g \LM^{div}
% \end{align*}
% is equivalent to aligning source- and target-domain data distributions in the latent (encoding) space induced by the encoder $\hat{\phi}$. 

\textcolor{YellowOrange}{\textbf{Predicted Reward.}}
{The predicted reward term in~\eqnref{eq:total_bound}, i.e., $\sum^{N}_{i=1} \left( \left| \langle \hat{\theta}, \hat{\phi} (\x_{i, \hat{a}_{i}}^{\mathcal{T}}) \rangle \right| \right) + \sum^{N}_{i=1} \mathbbm{1}{[a_{i}^{*} \neq \hat{a}_{i}]} \left( \left| \langle \hat{\theta}, \hat{\phi} (\x_{i, \hat{a}_{i}}^{\mathcal{S}}) \rangle \right| \right)$, can be translated to the loss term $\sum\nolimits_{i=1}^N\mathcal{L}_{i}^{reg}$, where for each round $i$ we have}
\begingroup\makeatletter\def\f@size{10}\check@mathfonts
\def\maketag@@@#1{\hbox{\m@th\normalsize\normalfont#1}}%
\begin{align}
    \mathcal{L}_{i}^{reg} =  \left| \langle \hat{\theta}, \hat{\phi} (\x_{i, \hat{a}_{i}}^{\mathcal{T}}) \rangle \right| + \mathbbm{1}{[a_{i}^{*} \neq \hat{a}_{i}]} \left(  \left| \langle \hat{\theta}, \hat{\phi} (\x_{i, \hat{a}_{i}}^{\mathcal{S}}) \rangle \right| \right) . \label{eq:r_pred}
\end{align}
\endgroup
{Minimizing the predicted reward $\mathcal{L}_{i}^{reg}$ regularizes the model to avoid overestimating rewards.}
%
% \textbf{Insights.} 
% Minimizing the predicted reward $\mathcal{L}_{i}^{reg}$ regularizes the model to avoid overestimating rewards. 

\textcolor{BlueGreen}{\textbf{Constant Term.}} 
{In~\eqnref{eq:total_bound}, $\psi + C$ is the constant term, where $\psi$ is defined in~\defref{def:psi_optimalvalue}, and $C$ contains some small constants which can be ignored (see the Appendix for more details).} %For further details, please refer to the Appendix.

\subsection{Putting Everything Together: DABand Training Algorithm}\label{sec:together}
Putting all non-constant loss terms above together, we have the final minimax optimization problem
\begin{align}
\min\nolimits_{\hat{\phi}}\max\nolimits_g \mathcal{L}^{DABand},\label{eq:minimax}
\end{align}
where the value function (objective function):
\begingroup\makeatletter\def\f@size{9.5}\check@mathfonts
\def\maketag@@@#1{\hbox{\m@th\normalsize\normalfont#1}}%
\begin{align}
\LM^{DABand} = 
\sum\nolimits_{i=1}^{N} (\mathcal{L}_{i}^{\mathit{R}_{\mathcal{S}}} 
+ 2 \cdot \mathcal{L}_{i}^{\epsilon_{\mathcal{S}} (h)} 
+ \mathcal{L}_{i}^{reg}) 
+ \lambda \cdot \mathcal{L}_{i}^{div}. \label{eq:final_obj}
\end{align}
\endgroup
Note that we need to alternate between updating $\hat{\theta}$ using LinUCB~\cite{li2010contextual} and updating the encoder $\hat{\phi}$ with the discriminator $g$ in~\eqnref{eq:minimax}. 

Formally, {our method, described in~\algref{alg:DABand}}, operates as follows: In each iteration, we access the context in the original feature space (i.e., $\mathbb{R}^{d}$) for each action from both source and target domains, denoted as $\left\{ x_{i, a}^{\mathcal{S}} \right\}_{a \in [K]}$ and $\left\{ x_{i, a}^{\mathcal{T}} \right\}_{a \in [K]}$ respectively. Subsequently, we compute the latent representation using the encoder $\phi (\cdot)$ and select the action $\hat{a}_i$ for iteration $i$ based on the LinUCB selection rule. The bandit parameters (i.e., $\theta, A, b$) are updated in each iteration. The weights in $\phi$ are updated every episode of length $H$ (i.e., a batch with $H$ past iterations) using our objective function in~\eqnref{eq:minimax}, following the same rule as in Neural-LinUCB~\citep{xu2020neural}.
% (see the Appendix for more implementation details).
%For more detailed implementation information, please refer to the Appendix.

\section{Experiments} \label{sec:res}
In this section, we compare DABand with existing methods on real-world datasets. 

\textbf{Datasets.} 
To demonstrate the effectiveness of our DABand, 
% for each dataset, we treat the ``easy'' domain as the low-cost source domain and the more challenging one as the high-cost target domain. This allows us to demonstrate the efficacy of our method by adapting from a simpler to a more complex domain within a contextual bandit setting. 
We evaluate our methods in terms of prediction accuracy and \textbf{zero-shot} target regret on three datasets, i.e., DIGIT~\citep{ganin2016domain}, VisDA17~\citep{peng2017visda}, and S2RDA49~\citep{tang2023new}. {See details for each dataset in~\appref{sec:app-dataset}.}

{\textbf{Baselines.} We compare our DABand with both classic and state-of-the-art contextual bandit algorithms, including \textbf{LinUCB}~\citep{li2010contextual}, LinUCB with principle component analysis (PCA) pre-processing (i.e., \textbf{LinUCB-P}), Neural-LinUCB~\citep{xu2020neural} (\textbf{NLinUCB}), and Neural-LinUCB variant that incorporates PCA (i.e., \textbf{NLinUCB-P}). Details for each baselines and discussion are in~\appref{sec:app-baseline}.} 
{Note that domain adaptation baselines are \textbf{not applicable} to our setting because it only works in offline settings and assumes complete observability of labels in the source domain (see~\appref{sec:app-baseline} for details). 
% Specifically, domain adaptation methods only work in offline settings, and assume complete observability of labels in the source domain. In contrast, contextual bandit is an online setting where the oracle is revealed \textbf{only when correctly predicted}. Therefore domain adaptation methods are not applicable to our online bandit settings.} 

%
\begin{table*}[t]
\setlength{\tabcolsep}{4pt}
\caption{Accuracy on the target domain for the DIGIT dataset. Note that in this zero-shot target regret setting, the accuracy $ACC = 1-\frac{1}{N}R_{\TM}$, where $R_{\TM}$ is the target regret. }
\vspace{-2pt}
\label{table:digit-accuracy}
\small
\begin{center}
\resizebox{\textwidth}{!}{
\begin{tabular}{l|cccccccccc|c}
\toprule[1.5pt]
Metrics & \multicolumn{10}{c|}{Acc Per Class~$\uparrow$} & \multicolumn{1}{c}{Acc~$\uparrow$}  \\ \midrule
Test Accuracy & 0 & 1 & 2 & 3 & 4 & 5 & 6 & 7 & 8 & 9 & Average \\ \midrule\midrule
% \textsc{Vanilla (Random Guess)} & 0.1000 & 0.1000 & 0.1000 & 0.1000 & 0.1000 & 0.1000 & 0.1000 & 0.1000 & 0.1000 & 0.1000 & 0.1000 \\[1.5pt] \midrule\midrule
\textsc{LinUCB}~\citep{li2010contextual} & 0.3667\scriptsize{$\pm$0.03} & 0.4045\scriptsize{$\pm$0.02} & 0.5102\scriptsize{$\pm$0.03} & 0.3811\scriptsize{$\pm$0.01} & 0.3157\scriptsize{$\pm$0.04} & 0.3445\scriptsize{$\pm$0.01} & 0.4229\scriptsize{$\pm$0.03} & 0.4595\scriptsize{$\pm$0.03} & 0.3489\scriptsize{$\pm$0.02} & 0.2449\scriptsize{$\pm$0.01} & 0.3808\scriptsize{$\pm$0.02} \\[1.5pt]
\textsc{LinUCB-P}~\citep{li2010contextual} & 0.3645\scriptsize{$\pm$0.02} & 0.6398\scriptsize{$\pm$0.04} & 0.3151\scriptsize{$\pm$0.02} & 0.2709\scriptsize{$\pm$0.03} & 0.2258\scriptsize{$\pm$0.01} & 0.1995\scriptsize{$\pm$0.02} & 0.1752\scriptsize{$\pm$0.01} & 0.3786\scriptsize{$\pm$0.04} & 0.1273\scriptsize{$\pm$0.02} & 0.2992\scriptsize{$\pm$0.03} & 0.3060\scriptsize{$\pm$0.02} \\[1.5pt]
\textsc{NLinUCB}~\citep{xu2020neural} & 0.3781\scriptsize{$\pm$0.04} & 0.3228\scriptsize{$\pm$0.03} & 0.3569\scriptsize{$\pm$0.03} & 0.3381\scriptsize{$\pm$0.04} & 0.3663\scriptsize{$\pm$0.04} & 0.4312\scriptsize{$\pm$0.05} & 0.4766\scriptsize{$\pm$0.05} & 0.4004\scriptsize{$\pm$0.05} & 0.4034\scriptsize{$\pm$0.02} & 0.3613\scriptsize{$\pm$0.03} & 0.3816\scriptsize{$\pm$0.04} \\[1.5pt]
\textsc{NLinUCB-P}~\citep{xu2020neural} & \textbf{0.8588}\scriptsize{$\pm$0.03} & 0.2224\scriptsize{$\pm$0.02} & 0.3880\scriptsize{$\pm$0.03} & 0.3029\scriptsize{$\pm$0.02} & 0.2978\scriptsize{$\pm$0.03} & 0.0000\scriptsize{$\pm$0.01} & 0.1869\scriptsize{$\pm$0.02} & 0.2243\scriptsize{$\pm$0.01} & 0.2148\scriptsize{$\pm$0.01} & 0.0000\scriptsize{$\pm$0.00} & 0.2706\scriptsize{$\pm$0.02} \\[1.5pt]
\textsc{DABand (Ours)} & 0.6891\scriptsize{$\pm$0.10} & \textbf{0.6662}\scriptsize{$\pm$0.03} & \textbf{0.6188}\scriptsize{$\pm$0.01} & \textbf{0.5703}\scriptsize{$\pm$0.05} & \textbf{0.6008}\scriptsize{$\pm$0.05} & \textbf{0.5534}\scriptsize{$\pm$0.03} & \textbf{0.6010}\scriptsize{$\pm$0.02} & \textbf{0.5709}\scriptsize{$\pm$0.01} & \textbf{0.6266}\scriptsize{$\pm$0.04} & \textbf{0.5023}\scriptsize{$\pm$0.02} & \textbf{0.6002}\scriptsize{$\pm$0.02} \\[1.5pt] \midrule\midrule
\textsc{\textbf{Ours} vs. NLinUCB} & \textcolor{ForestGreen}{\textbf{+0.3110}} & \textcolor{ForestGreen}{\textbf{+0.3434}} & \textcolor{ForestGreen}{\textbf{+0.2619}} & \textcolor{ForestGreen}{\textbf{+0.2322}} & \textcolor{ForestGreen}{\textbf{+0.2345}} & \textcolor{ForestGreen}{\textbf{+0.1222}} & \textcolor{ForestGreen}{\textbf{+0.1244}} & \textcolor{ForestGreen}{\textbf{+0.1705}} & \textcolor{ForestGreen}{\textbf{+0.2232}} & \textcolor{ForestGreen}{\textbf{+0.1410}} & \textcolor{ForestGreen}{\textbf{+0.2186}} \\[1.5pt]
\bottomrule[1.5pt]
\end{tabular}
}
\end{center}
\vspace{-0.5cm}
\end{table*}
% digit acc res
%
\begin{table*}[t]
\setlength{\tabcolsep}{4pt}
\caption{Accuracy on the target domain for the VisDA17 dataset. Note that in this zero-shot target regret setting, the accuracy $ACC = 1-\frac{1}{N}R_{\TM}$, where $R_{\TM}$ is the target regret. }
\vspace{4pt}
\label{table:visda-accuracy}
\small
\begin{center}
\resizebox{\textwidth}{!}{
\begin{tabular}{l|cccccccccccc|c}
\toprule[1.5pt]
Metrics & \multicolumn{12}{c|}{Acc Per Class~$\uparrow$} & \multicolumn{1}{c}{Acc~$\uparrow$} \\ \midrule
Test Accuracy & aeroplane & bicycle & bus & car & horse & knife & motorcycle & person & plant & skateboard & train & truck & Average \\ \midrule\midrule
% \textsc{Vanilla} & 0.0833 & 0.0833 & 0.0833 & 0.0833 & 0.0833 & 0.0833 & 0.0833 & 0.0833 & 0.0833 & 0.0833 & 0.0833 & 0.0833 & 0.0833 \\[1.5pt] \midrule\midrule
\textsc{LinUCB}~\citep{li2010contextual} & N/A & N/A & N/A & N/A & N/A & N/A & N/A & N/A & N/A & N/A & N/A & N/A & N/A \\[1.5pt]
\textsc{LinUCB-P}~\citep{li2010contextual} & 0.0510\scriptsize{$\pm$0.02} & 0.0190\scriptsize{$\pm$0.00} & 0.1360\scriptsize{$\pm$0.05} & 0.0440\scriptsize{$\pm$0.02} & 0.0390\scriptsize{$\pm$0.01} & \textbf{0.5580}\scriptsize{$\pm$0.06} & 0.0410\scriptsize{$\pm$0.02} & 0.2050\scriptsize{$\pm$0.05} & 0.0850\scriptsize{$\pm$0.04} & 0.0920\scriptsize{$\pm$0.13} & 0.0580\scriptsize{$\pm$0.02} & 0.0460\scriptsize{$\pm$0.12} & 0.1145\scriptsize{$\pm$0.06} \\[1.5pt]
\textsc{NLinUCB}~\citep{xu2020neural} & 0.2870\scriptsize{$\pm$0.03} & 0.0220\scriptsize{$\pm$0.02} & 0.0000\scriptsize{$\pm$0.00} & 0.1250\scriptsize{$\pm$0.02} & 0.0000\scriptsize{$\pm$0.00} & 0.0130\scriptsize{$\pm$0.02} & 0.0470\scriptsize{$\pm$0.02} & 0.0000\scriptsize{$\pm$0.00} & 0.0460\scriptsize{$\pm$0.02} & 0.0050\scriptsize{$\pm$0.00} & 0.6550\scriptsize{$\pm$0.04} & 0.0000\scriptsize{$\pm$0.00} & 0.1001\scriptsize{$\pm$0.02} \\[1.5pt]
\textsc{NLinUCB-P}~\citep{xu2020neural} & 0.0060\scriptsize{$\pm$0.01} & 0.0020\scriptsize{$\pm$0.01} & \textbf{0.9222}\scriptsize{$\pm$0.02} & 0.0010\scriptsize{$\pm$0.00} & 0.0080\scriptsize{$\pm$0.01} & 0.0990\scriptsize{$\pm$0.03} & 0.0030\scriptsize{$\pm$0.00} & 0.0020\scriptsize{$\pm$0.00} & 0.0180\scriptsize{$\pm$0.02} & 0.0420\scriptsize{$\pm$0.01} & 0.0000\scriptsize{$\pm$0.00} & 0.0150\scriptsize{$\pm$0.01} & 0.0932\scriptsize{$\pm$0.01} \\[1.5pt]
\textsc{DABand (Ours)} & \textbf{0.5504}\scriptsize{$\pm$0.02} & \textbf{0.3562}\scriptsize{$\pm$0.04} & 0.4462\scriptsize{$\pm$0.02} & \textbf{0.3844}\scriptsize{$\pm$0.01} & \textbf{0.5582}\scriptsize{$\pm$0.04} & 0.2632\scriptsize{$\pm$0.03} & \textbf{0.6474}\scriptsize{$\pm$0.03} & \textbf{0.2396}\scriptsize{$\pm$0.03} & \textbf{0.4882}\scriptsize{$\pm$0.04} & \textbf{0.4062}\scriptsize{$\pm$0.02} & \textbf{0.7685}\scriptsize{$\pm$0.05} & \textbf{0.1968}\scriptsize{$\pm$0.02} & \textbf{0.4644}\scriptsize{$\pm$0.03} \\[1.5pt] \midrule\midrule
\textsc{\textbf{Ours} vs. NLinUCB} & \textcolor{ForestGreen}{\textbf{+0.2634}} & \textcolor{ForestGreen}{\textbf{+0.3342}} & \textcolor{ForestGreen}{\textbf{+0.4462}} & \textcolor{ForestGreen}{\textbf{+0.2594}} & \textcolor{ForestGreen}{\textbf{+0.5582}} & \textcolor{ForestGreen}{\textbf{+0.2502}} & \textcolor{ForestGreen}{\textbf{+0.6004}} & \textcolor{ForestGreen}{\textbf{+0.2396}} & \textcolor{ForestGreen}{\textbf{+0.4422}} & \textcolor{ForestGreen}{\textbf{+0.4012}} & \textcolor{ForestGreen}{\textbf{+0.1135}} & \textcolor{ForestGreen}{\textbf{+0.1968}} & \textcolor{ForestGreen}{\textbf{+0.3643}} \\[1.5pt]
\bottomrule[1.5pt]
\end{tabular}
}
\end{center}
\vspace{-0.5cm}
\end{table*}
% visda acc res

\textbf{Zero-Shot Target Regret (Accuracy).}
{We evaluate different methods on the zero-shot target regret setting in~\defref{def:target_regret}, where all methods can only collect feedback from the source domain, but not the target domain. 
In this setting, accuracy is equal to $1$ minus the average target regret, i.e., $ACC = 1-\frac{1}{N}R_{\TM}$, where $R_{\TM}$ is defined in~\eqnref{eq:target_regret} and \defref{def:target_regret}. Therefore higher accuracy indicates lower target regret and better performance.} 

{We report the per-class accuracy and average accuracy of different methods in~\tabref{table:digit-accuracy}, \tabref{table:visda-accuracy} and \tabref{table:s2rda-accuracy} for DIGIT, VisDA17 and S2RDA49, respectively. 
Our DABand demonstrates favorable performance against NLinUCB, a leading contextual bandit algorithm with representation learning. Despite NLinUCB's superior representation learning capabilities through neural networks (NN) over LinUCB -- yielding impressive performance in single-domain applications -- its efficacy in cross-domain tasks is heavily contingent on the disparity between the source and target domains. Typically, the strengths of NN, which include enhanced representation learning (feature extraction), may inadvertently amplify this domain divergence, potentially leading to overfitting on the source domain. Conversely, our DABand algorithm not only improves accuracy but also adeptly addresses the challenges posed by domain divergence. Note that LinUCB encounters an out-of-memory issue due to the high-dimensionality of the data in VisDA17 and S2RDA49, marked as ``N/A'' in~\tabref{table:visda-accuracy} and~\tabref{table:s2rda-accuracy}, respectively.}
\begin{table*}[t]
\setlength{\tabcolsep}{4pt}
\caption{Accuracy on the target domain for the S2RDA49 dataset. Note that in this zero-shot target regret setting, the accuracy $ACC = 1-\frac{1}{N}R_{\TM}$, where $R_{\TM}$ is the target regret. }
\vspace{4pt}
\label{table:s2rda-accuracy}
\small
\begin{center}
\resizebox{\textwidth}{!}{
\begin{tabular}{l|cccccccccc|c}
\toprule[1.5pt]
Metrics & \multicolumn{10}{c|}{Acc Per Class~$\uparrow$} & \multicolumn{1}{c}{Acc~$\uparrow$} \\ \midrule
Test Accuracy & aeroplane & bicycle & bus & car & knife & motorcycle & plant & skateboard & train & truck & Average \\ \midrule\midrule
% \textsc{Vanilla} & 0.0833 & 0.0833 & 0.0833 & 0.0833 & 0.0833 & 0.0833 & 0.0833 & 0.0833 & 0.0833 & 0.0833 & 0.0833 & 0.0833 & 0.0833 \\[1.5pt] \midrule\midrule
\textsc{LinUCB}~\citep{li2010contextual} & N/A & N/A & N/A & N/A & N/A & N/A & N/A & N/A & N/A & N/A & N/A \\[1.5pt]
\textsc{LinUCB-P}~\citep{li2010contextual} & 0.1430\scriptsize{$\pm$0.02} & 0.2600\scriptsize{$\pm$0.03} & 0.0810\scriptsize{$\pm$0.01} & 0.0240\scriptsize{$\pm$0.01} & 0.1070\scriptsize{$\pm$0.02} & 0.0570\scriptsize{$\pm$0.01} & 0.1060\scriptsize{$\pm$0.01} & 0.0390\scriptsize{$\pm$0.01} & 0.0960\scriptsize{$\pm$0.02} & 0.1260\scriptsize{$\pm$0.03} & 0.1039\scriptsize{$\pm$0.02} \\[1.5pt]
\textsc{NLinUCB}~\citep{xu2020neural} & 0.0350\scriptsize{$\pm$0.00} & 0.1210\scriptsize{$\pm$0.02} & 0.0510\scriptsize{$\pm$0.01} & 0.0870\scriptsize{$\pm$0.02} & 0.0270\scriptsize{$\pm$0.00} & 0.0240\scriptsize{$\pm$0.01} & 0.2720\scriptsize{$\pm$0.04} & 0.0090\scriptsize{$\pm$0.00} & \textbf{0.1730}\scriptsize{$\pm$0.02} & 0.0309\scriptsize{$\pm$0.01} & 0.1108\scriptsize{$\pm$0.02} \\[1.5pt]
\textsc{NLinUCB-P}~\citep{xu2020neural} & 0.0000\scriptsize{$\pm$0.00} & 0.0000\scriptsize{$\pm$0.00} & 0.0000\scriptsize{$\pm$0.00} & \textbf{0.2990}\scriptsize{$\pm$0.03} & \textbf{0.7000}\scriptsize{$\pm$0.02} & 0.0000\scriptsize{$\pm$0.00} & 0.0000\scriptsize{$\pm$0.00} & 0.0000\scriptsize{$\pm$0.00} & 0.0000\scriptsize{$\pm$0.00} & 0.1160\scriptsize{$\pm$0.01} & 0.1115\scriptsize{$\pm$0.01} \\[1.5pt]
\textsc{DABand (Ours)} & \textbf{0.5501}\scriptsize{$\pm$0.03} & \textbf{0.6418}\scriptsize{$\pm$0.03} & \textbf{0.5844}\scriptsize{$\pm$0.03} & 0.2346\scriptsize{$\pm$0.01} & 0.0410\scriptsize{$\pm$0.01} & \textbf{0.7580}\scriptsize{$\pm$0.04} &  \textbf{0.5689}\scriptsize{$\pm$0.04} & \textbf{0.1345}\scriptsize{$\pm$0.02} & 0.1118\scriptsize{$\pm$0.02} & \textbf{0.3126}\scriptsize{$\pm$0.02} & \textbf{0.3923}\scriptsize{$\pm$0.03} \\[1.5pt] \midrule\midrule
\textsc{\textbf{Ours} vs. NLinUCB} & \textcolor{ForestGreen}{\textbf{+0.5151}} & \textcolor{ForestGreen}{\textbf{+0.5208}} & \textcolor{ForestGreen}{\textbf{+0.5334}} & \textcolor{ForestGreen}{\textbf{+0.1476}} & \textcolor{ForestGreen}{\textbf{+0.0140}} & \textcolor{ForestGreen}{\textbf{+0.7340}} &  \textcolor{ForestGreen}{\textbf{+0.2969}} & \textcolor{ForestGreen}{\textbf{+0.1255}} & \textcolor{Red}{\textbf{-0.0612}} & \textcolor{ForestGreen}{\textbf{+0.2817}} & \textcolor{ForestGreen}{\textbf{+0.2815}} \\[1.5pt]
\bottomrule[1.5pt]
\end{tabular}
}
\end{center}
\vspace{-0.5cm}
\end{table*}
% s2rda acc res
%
\begin{figure*}[!tb]
\begin{center}
% \vskip -0.2in
%\framebox[4.0in]{$\;$}
%\includegraphics[height=5cm]{likeli1.eps}
\hskip-0.1cm
\subfigure{
\includegraphics[width=0.325\textwidth]{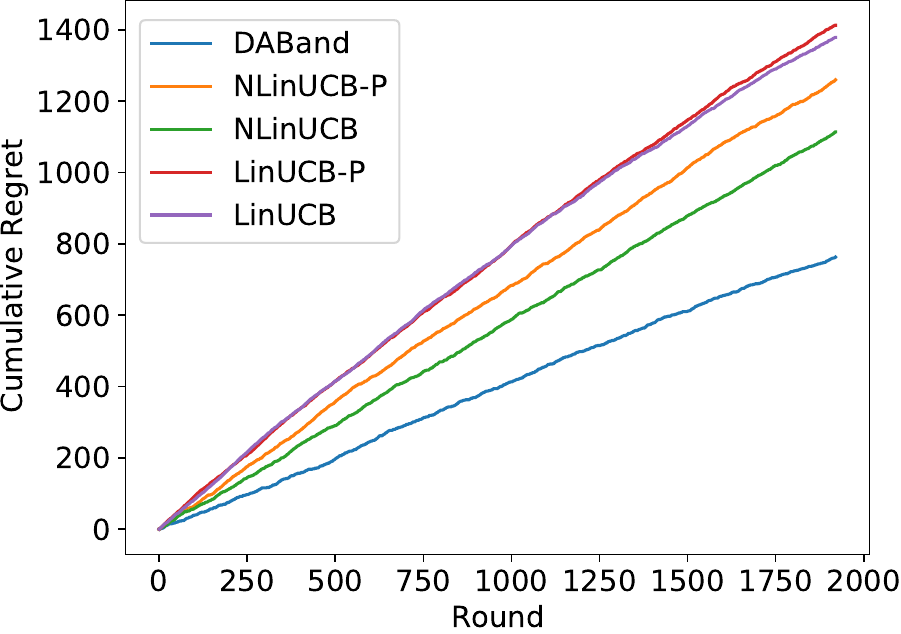}}
% \hspace{-0.05in}
\subfigure{
\includegraphics[width=0.325\textwidth]{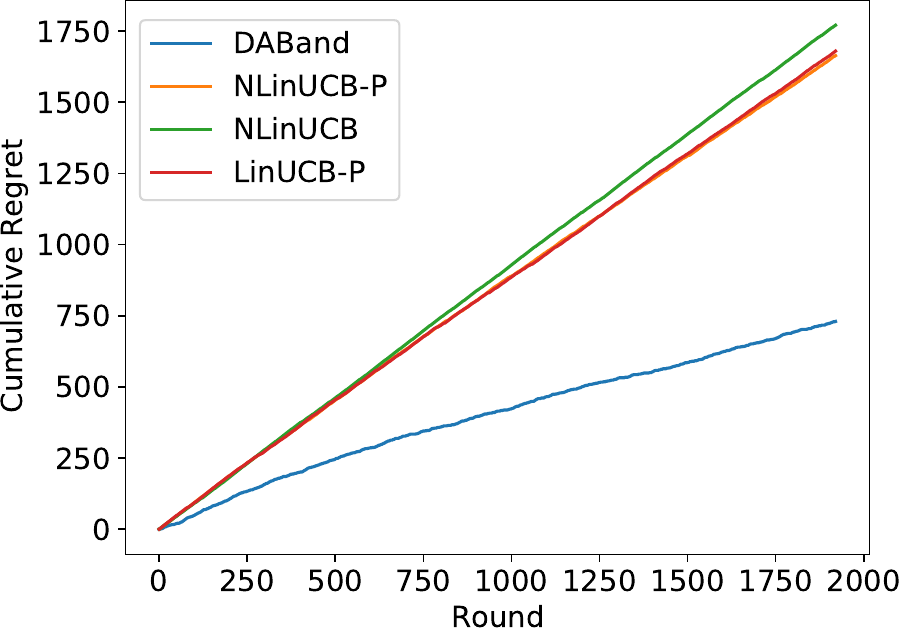}}
% \hspace{-0.05in}
\subfigure{
\includegraphics[width=0.325\textwidth]{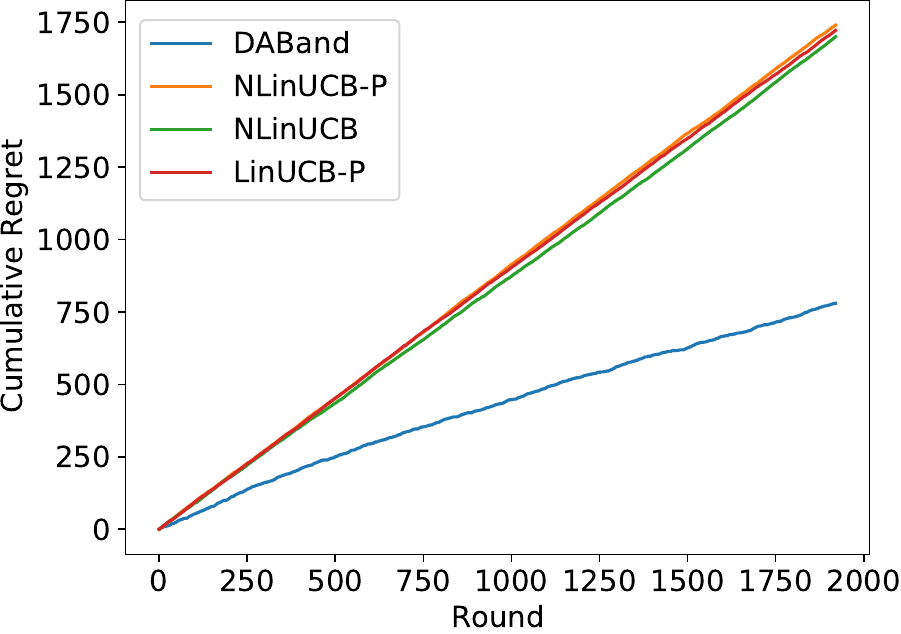}}
\end{center}
\vskip -0.2in
\caption{The cumulative regrets of different methods for 1,920 rounds on {DIGIT} (\textbf{left}), {VisDA17} (\textbf{middle}), and {S2RDA49} (\textbf{right}). Results are averaged over 5 runs. LinUCB is not reported for {VisDA17} and {S2RDA49} due to out-of-memory issues.}
\label{fig:continue}
\vskip -0.2in
\end{figure*}
%
% %
% \begin{figure}[tbp]
% \begin{adjustbox}{valign=t}
% \centering
% \begin{minipage}{0.49\textwidth}
% \includegraphics[width=0.8\textwidth]{figures/DIGIt_continue_target.pdf}
% % \vskip -0.40cm
%     \caption{The cumulative regrets of different methods for 1,920 rounds on the DIGIT dataset. Results are averaged over 5 runs.}\label{fig:digit}
% % \vskip +0.1cm
% \label{plot_DIGIT}
% \end{minipage}
% \end{adjustbox}
% \hfill
% \begin{adjustbox}{valign=t}
% \centering
% \begin{minipage}{0.49\textwidth}
% \includegraphics[width=0.8\textwidth]{figures/VisDA17_continue_target.pdf}
% % \vskip -0.4cm
%     \caption{The cumulative regrets of different methods for 1,920 rounds on the VisDA17 dataset. Results are averaged over 5 runs. LinUCB is not reported here due to out-of-memory.}\label{fig:visda}
% \vskip -0.4cm
% \label{plot_VisDA17}
% \end{minipage}
% \end{adjustbox}
% \end{figure} 
% %

{\textbf{Performance Analysis of LinUCB and NLinUCB.}
Our target regret bound in~\eqnref{eq:total_bound} consists of 5 terms, i.e., source regret, regression error, data divergence, constant, and predicted reward. 
After applying linear contextual bandits on the source domain, the source regret enjoys a sub-linear rate. However, in the linear model, the encoder $\hat{\phi}(\x) = \x$ is an identity function and therefore fail to align source-domain and target-domain contexts, leading to an unbounded, large data divergence term in~\eqnref{eq:total_bound}. This is the main reason why linear contextual bandits, such as LinUCB, perform poorly in the cross-domain contextual bandit setting. }

{Note that even contextual bandits using a nonlinear encoder, e.g., Neural-LinUCB (NLinUCB), fail in this case because their encoders $\hat{\phi}(\x)$ are not learned to align source-domain and target-domain contexts; they therefore will still lead to an unbounded, large data divergence term in~\eqnref{eq:total_bound}, and subsequently poor performance in the cross-domain contextual bandit setting.}

\begin{wraptable}{R}{0.6\textwidth}
\vskip -0.6cm
\setlength{\tabcolsep}{8pt}
\caption{Results of the ablation study in terms of accuracy (higher is better). Note that the accuracy $ACC = 1-\frac{1}{N}R_{\TM}$, where $R_{\TM}$ is the target regret. ``R'' and ``P'' is short for ``Regression Error'' and ``Predicted Reward'', respectively.}
\vspace{-3.2pt}
\label{table:ablation}
\begin{center}
\resizebox{0.6\textwidth}{!}{
% \footnotesize
\begin{tabular}{c|c|c|c|c}
\toprule[1.5pt]
Datasets  & \multicolumn{1}{c|}{\textsc{w/o R $\&$ P}} & \multicolumn{1}{c|}{\textsc{w/o R}} & \multicolumn{1}{c|}{\textsc{w/o P}} & \multicolumn{1}{c}{\textsc{DABand (Full)}}  \\ \midrule
\textsc{DIGIT} & 0.5676 & 0.5682 & 0.5768 & \textbf{0.6002} \\[1.5pt] \midrule
\textsc{VisDA17} & 0.4088 & 0.4096 & 0.4304 & \textbf{0.4644} \\[1.5pt] \midrule
\textsc{S2RDA49} & 0.3691 & 0.3694 & 0.3719 & \textbf{0.3923} \\
\bottomrule[1.5pt]
\end{tabular}
}
\end{center}
\vskip -0.3cm
\end{wraptable} 

\textbf{Continued Training in Target Domains and Cumulative Regret.} 
Besides zero-shot target regret, we also evaluate how different methods perform when one continues their training in the high-cost target domain and starts to collect feedback (reward) after the model is trained in the source domain. A good cross-domain bandit model can provide a head-start in this setting, therefore enjoying significantly lower cumulative regret in the target domain and saving substantial cost in collecting feedback in the high-cost target domain. 

{\figref{fig:continue} shows the cumulative regrets of different methods on DIGIT, VisDA17 and S2RDA49, respectively.} We show results averaged over five runs with different random seeds. Our DABand consistently surpasses all baselines for all rounds in terms of both cumulative regrets and their increase rates, demonstrating DABand's potential to significantly reduce the cost in high-cost target domains. 

\textbf{Ablation Study.} 
To verify the effectiveness of each term in our DABand's regret bound (\thmref{thm:target_regret_bound}, we report the accuracy of our proposed DABand after removing the \textbf{R}egression Error term and/or the \textbf{P}redicted Reward term (during training) in Table~\ref{table:ablation} for DIGIT, VisDA17 and S2RDA49. Results on all datasets show that removing either term will lead to performance drop. For example, in VisDA17, our full DABand achieves accuracy of $0.4642$; the accuracy drops to $0.4238$ after removing the Regression Error (R) term, and further drops to $0.4088$ after removing the Predicted Reward (P) term. These results verify the effectiveness of these two terms in our DABand. 

\section{Conclusion}
In this paper, we introduce a novel domain adaptive neural contextual bandit algorithm, DABand, which adeptly combines effective exploration with representation alignment, utilizing unlabeled data from both source and target domains. Our theoretical analysis demonstrates that DABand can attain a sub-linear regret bound within the target domain. This marks the first regret analysis for domain adaptation in contextual bandit problems incorporating deep representation, shallow exploration, and adversarial alignment. {We show that all these elements are instrumental in the domain adaptive bandit setting on real-world datasets.
% , which involves transferring from gray-scale to colorful styles, and the VisDA dataset, which involves transferring from 3D rendering images to dense real-world images. 
Moving forward,} interesting future research could be to uncover more innovative techniques for aligning the source and target domains (mentioned in~\appref{sec:limitation}), particularly within the constraints of: (1) bandit settings, as opposed to classification settings, and (2) the high-dimensional and dense nature of the target domain, contrasted with the sparse and simplistic nature of the source domain. It would also be interesting to explore the effect of improved uncertainty quantification~\citep{VIR}, potentially via Bayesian deep learning~\citep{BDL,BDLSurvey,RPPU}, on DABand's performance. 

\section*{{Acknowledgments}}
We sincerely appreciate the generous support from {the reviewers/AC for the constructive comments to improve the paper. ZW and XH are partially sponsored by a subcontract of NSF grant 2229876, the A. Russell Chandler III Professorship at Georgia Institute of Technology, and NIH-sponsored Georgia Clinical \& Translational Science Alliance. HW is partially supported by} the Microsoft Research AI \& Society Fellowship, NSF Grant IIS-2127918, NSF CAREER Award IIS-2340125, NIH Grant 1R01CA297832, and the Amazon Faculty Research Award. This research is also supported by NSF National Artificial Intelligence Research Resource (NAIRR) Pilot and the Frontera supercomputer, funded by the National Science Foundation (award NSF-OAC 1818253) and hosted at the Texas Advanced Computing Center (TACC) at The University of Texas at Austin. Finally, we extend our gratitude to the Center for AI Safety (CAIS) for providing the essential computing resources that made this work possible.

\bibliographystyle{iclr2025_conference}
\bibliography{reference}
%%%%%%%%%%%%%%%%%%%%%%%%%%%%%%%%%%%%%%%%%%%%%%%%%%%%%%%%%%%%
\newpage
\appendix

\section{Proofs} \label{sec:app-proof}
\subsection{Proofs for Lemmas}
\begin{lemma}\label{lemma:A1}
For any hypotheses $h, h' \in \mathcal{H}$ and $a \in \mathcal{A}$ is selected by either $h$ or $h'$,
\begin{align*}
    |\epsilon_{\mathcal{S}} (h, h') - \epsilon_{\mathcal{T}} (h, h')| \leq \dfrac{N}{2} \hat{d}_{\mathcal{H}\Delta\mathcal{H}} (\mathcal{S}, \mathcal{T}).
\end{align*}
\end{lemma}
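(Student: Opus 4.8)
The plan is to reduce the quantity $|\epsilon_{\mathcal{S}}(h,h') - \epsilon_{\mathcal{T}}(h,h')|$ to the single absolute-difference function $g = |h - h'|$ and then invoke the supremum in the definition of $\hat{d}_{\mathcal{H}\Delta\mathcal{H}}$. First I would fix an arbitrary pair $h, h' \in \mathcal{H}$ and observe that, by~\defref{def:h_delta_h}, the map $g(\x) = |h(\x) - h'(\x)|$ lies in $\mathcal{H}\Delta\mathcal{H}$. Rewriting both errors from~\defref{def:err_2_model} in terms of $g$ gives $\epsilon_{\mathcal{S}}(h,h') = \sum_{i=1}^N g(\x_{i,\hat{a}_i}^{\mathcal{S}})$ and $\epsilon_{\mathcal{T}}(h,h') = \sum_{i=1}^N g(\x_{i,\hat{a}_i}^{\mathcal{T}})$, where the actions $\hat{a}_i$ are precisely those selected by $h$ or $h'$, as posited in the hypothesis of the lemma.

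Next I would identify each empirical sum with $N$ times the corresponding empirical expectation, i.e. $\sum_{i=1}^N g(\x_{i,\hat{a}_i}^{\mathcal{S}}) = N\,\mathbb{E}_{x\sim\mathcal{S}}[g(x)]$ and analogously for $\mathcal{T}$, so that linearity yields $|\epsilon_{\mathcal{S}}(h,h') - \epsilon_{\mathcal{T}}(h,h')| = N\,|\mathbb{E}_{x\sim\mathcal{S}}[g(x)] - \mathbb{E}_{x\sim\mathcal{T}}[g(x)]|$. Because $(h,h')$ is one particular admissible pair, the inner difference is dominated by the supremum over all pairs in $\mathcal{H}$ appearing in~\defref{def:h_delta_h}: $|\mathbb{E}_{x\sim\mathcal{S}}[|h(x)-h'(x)|] - \mathbb{E}_{x\sim\mathcal{T}}[|h(x)-h'(x)|]| \le \sup_{h,h'\in\mathcal{H}} |\mathbb{E}_{x\sim\mathcal{S}}[|h(x)-h'(x)|] - \mathbb{E}_{x\sim\mathcal{T}}[|h(x)-h'(x)|]| = \tfrac{1}{2} \hat{d}_{\mathcal{H}\Delta\mathcal{H}}(\mathcal{S},\mathcal{T})$, where the final equality merely accounts for the factor of $2$ built into the divergence. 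Multiplying through by $N$ then delivers the claimed bound $\tfrac{N}{2}\,\hat{d}_{\mathcal{H}\Delta\mathcal{H}}(\mathcal{S},\mathcal{T})$.

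The main obstacle I anticipate is handling the action-selection dependence carefully: the selected actions $\hat{a}_i$ can differ between the source and target sums, so one must argue that passing to $g = |h-h'|$ together with the symmetric supremum over $\mathcal{H}\Delta\mathcal{H}$ absorbs this dependence \emph{uniformly}, i.e. that a single $g$ controls both domain terms simultaneously. A secondary point requiring care is the identification of the finite sum with an expectation under the (empirical) domain distributions; this is exactly the step that licenses replacing $\sum_{i=1}^N$ by $N\,\mathbb{E}_{x\sim\cdot}$ and that produces the linear prefactor $N$ in the bound.
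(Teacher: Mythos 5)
Your proposal is correct and follows essentially the same route as the paper's proof: both arguments identify the empirical sum $\epsilon_{\mathcal{D}}(h,h') = \sum_{i=1}^N |h(\x_{i,\hat{a}_i}^{\mathcal{D}}) - h'(\x_{i,\hat{a}_i}^{\mathcal{D}})|$ with $N$ times the empirical expectation $\mathbb{E}_{\x \sim \mathcal{D}}[|h(\x)-h'(\x)|]$, then use the fact that the supremum over all pairs in $\mathcal{H}$ (carrying the built-in factor of $2$ in $\hat{d}_{\mathcal{H}\Delta\mathcal{H}}$) dominates the particular pair $(h,h')$, yielding the $\tfrac{N}{2}$ prefactor. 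The only cosmetic difference is direction — the paper starts from the divergence and derives $\hat{d}_{\mathcal{H}\Delta\mathcal{H}}(\mathcal{S},\mathcal{T}) \geq \tfrac{2}{N}|\epsilon_{\mathcal{S}}(h,h') - \epsilon_{\mathcal{T}}(h,h')|$, while you start from the error difference and bound upward — but the chain of identities and the single inequality invoked are the same.
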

\begin{proof}
By definition of  $\mathcal{H}\Delta\mathcal{H}$ (as defined in Def.~\ref{def:h_delta_h}), we have
\begin{eqnarray*}
    \hat{d}_{\mathcal{H}\Delta\mathcal{H}} (\mathcal{S}, \mathcal{T}) &\stackrel{\mbox{Def.~\ref{def:h_delta_h}}}{=}& 2 \enspace \sup_{h, h' \in \mathcal{H}} |\mathbb{E}_{\x \sim \mathcal{S}} [ | h(\x) - h'(\x) | ] - \mathbb{E}_{\x \sim \mathcal{T}} [ | h(\x) - h'(\x) | ]| \\
    &=& \dfrac{2}{N} \enspace \sup_{h, h' \in \mathcal{H}} \Big| N \cdot \mathbb{E}_{\x \sim \mathcal{S}} [ | h(\x) - h'(\x) | ] - N \cdot \mathbb{E}_{\x \sim \mathcal{T}} [ | h(\x) - h'(\x) | ] \Big| \\
    &\stackrel{\mbox{Def.~\ref{def: Train_error}}}{=}& \dfrac{2}{N} \enspace \sup_{h, h' \in \mathcal{H}} \Big|\epsilon_{\mathcal{S}} (h, h') - \epsilon_{\mathcal{T}} (h, h')\Big| \\ 
    &\geq& \dfrac{2}{N} \enspace \Big| \epsilon_{\mathcal{S}} (h, h') - \epsilon_{\mathcal{T}} (h, h')\Big|.
\end{eqnarray*}
\end{proof}
%
% \begin{lemma}[\textbf{Triangle Inequality}]
\begin{lemma}\label{lemma:A2}
Given three hypotheses $h_1, h_2, h_3 \in \mathcal{H}$, we have the following triangle inequality:
\begin{align}
    \epsilon_{\mathcal{D}} (h_1, h_2) \leq \epsilon_{\mathcal{D}} (h_1, h_3) + \epsilon_{\mathcal{D}} (h_2, h_3). 
\end{align}
\end{lemma}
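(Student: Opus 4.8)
The plan is to reduce this to the ordinary triangle inequality for real numbers applied summand by summand. Recall from \defref{def:err_2_model} that, once we fix a common sequence of selected actions $\{\hat{a}_i\}_{i \in [N]}$, the error between two hypotheses is just an $L_1$-type sum
\begin{align*}
    \epsilon_{\mathcal{D}} (h_1, h_2) = \sum\nolimits_{i=1}^{N} \big| h_1 (\x_{i, \hat{a}_i}^{\mathcal{D}} ) - h_2 (\x_{i, \hat{a}_i}^{\mathcal{D}} ) \big|.
\end{align*}
The crucial preliminary point is that all three quantities $\epsilon_{\mathcal{D}}(h_1,h_2)$, $\epsilon_{\mathcal{D}}(h_1,h_3)$, $\epsilon_{\mathcal{D}}(h_2,h_3)$ must be evaluated at the \emph{same} selected contexts $\x_{i,\hat{a}_i}^{\mathcal{D}}$; I would state this explicitly at the outset so that the three sums are termwise comparable.

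With that fixed, the core step is to invoke the scalar triangle inequality $|u - v| \le |u - w| + |w - v|$ with the substitutions $u = h_1(\x_{i,\hat{a}_i}^{\mathcal{D}})$, $v = h_2(\x_{i,\hat{a}_i}^{\mathcal{D}})$, and $w = h_3(\x_{i,\hat{a}_i}^{\mathcal{D}})$, which holds separately for each round $i$. Summing this pointwise bound over $i = 1, \dots, N$ and splitting the right-hand side into two sums yields
\begin{align*}
    \sum\nolimits_{i=1}^{N} \big| h_1 - h_2 \big| \le \sum\nolimits_{i=1}^{N} \big| h_1 - h_3 \big| + \sum\nolimits_{i=1}^{N} \big| h_3 - h_2 \big|,
\end{align*}
where the arguments $\x_{i,\hat{a}_i}^{\mathcal{D}}$ are suppressed for brevity. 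Finally I would identify the two right-hand sums with $\epsilon_{\mathcal{D}}(h_1,h_3)$ and $\epsilon_{\mathcal{D}}(h_3,h_2)$, using the symmetry $\epsilon_{\mathcal{D}}(h_3,h_2) = \epsilon_{\mathcal{D}}(h_2,h_3)$ that is immediate from $|w - v| = |v - w|$, giving exactly the claimed inequality.

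There is essentially no hard analytical obstacle here, since everything is linear and the absolute value is a genuine norm. The only place warranting care is bookkeeping around the selected actions: the definition of $\epsilon_{\mathcal{D}}$ ties the evaluation points to an action-selection rule, and if $h_1$, $h_2$, $h_3$ induced different selected actions the three error terms would live on different contexts and the termwise comparison would break. I would therefore make the implicit assumption of a shared action sequence precise (or note that the inequality is asserted for the common selected contexts), after which the result is a direct consequence of the triangle inequality for $|\cdot|$ together with linearity of the finite sum.
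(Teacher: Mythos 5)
Your proof is correct and follows essentially the same route as the paper's: apply the scalar triangle inequality (the paper does this by adding and subtracting $h_3(\x_{i,\hat{a}_i}^{\mathcal{D}})$ inside each absolute value) termwise, then sum over $i$ and split into the two error terms. Your explicit remark that all three errors must be evaluated on a common sequence of selected contexts is a reasonable clarification of what the paper leaves implicit in \defref{def:err_2_model}, but it does not change the argument.
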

\begin{proof}
The hypothesis difference
\begin{eqnarray*}
    \epsilon_{\mathcal{D}} (h_1, h_2) &=& \sum\nolimits_{i=1}^{N} \bigg| h_{1} (x^{\mathcal{D}}) - h_{2} (x^{\mathcal{D}}) \bigg| \\ 
    &\stackrel{\mbox{Def.~\ref{def:err_2_model}}}{=}& \sum\nolimits_{i=1}^{N} \bigg| h_{1} (x^{\mathcal{D}}) - h_{2} (x^{\mathcal{D}}) + h_{3} (x^{\mathcal{D}}) - h_{3} (x^{\mathcal{D}}) \bigg| \\ 
    &\leq& \sum\nolimits_{i=1}^{N} \bigg| h_{1} (x^{\mathcal{D}}) - h_{3} (x^{\mathcal{D}}) \bigg| + \sum\nolimits_{i=1}^{N} \bigg| h_{3} (x^{\mathcal{D}}) - h_{2} (x^{\mathcal{D}}) \bigg| \\ 
    &=& \epsilon_{S} (h_1, h_3) + \epsilon_{S} (h_2, h_3).
\end{eqnarray*}
\end{proof}
\begin{lemma} \label{lemma:A3}
Let $\mathcal{H}$ be a hypothesis space. Then for every $h \in \mathcal{H}$:
\begin{align}
    \epsilon_{\mathcal{T}} (h) \leq \epsilon_{\mathcal{S}} (h) + \dfrac{N}{2} \hat{d}_{\mathcal{H}\Delta\mathcal{H}} (\mathcal{S}, \mathcal{T}) + \psi,
\end{align}
where $\psi$ is defined in~\defref{def:psi_optimalvalue}. 
\end{lemma}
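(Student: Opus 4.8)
The plan is to follow the classical Ben-David-style domain adaptation argument, using the optimal hypothesis $h^*$ from \defref{def:psi_optimalvalue} as a pivot and chaining together the two lemmas just proved. The crucial bookkeeping observation is that $\epsilon_{\mathcal{T}}(h)$ is shorthand for $\epsilon_{\mathcal{T}}(f_{\mathcal{T}}, h)$ (and likewise $\epsilon_{\mathcal{S}}(h) = \epsilon_{\mathcal{S}}(f_{\mathcal{S}}, h)$), so the whole argument can be phrased in terms of the pairwise error $\epsilon_{\mathcal{D}}(\cdot, \cdot)$, to which the triangle inequality (Lemma 2) and the divergence bound (Lemma 1) both apply directly.

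First I would insert $h^*$ into the target error via the triangle inequality, then transfer the resulting cross-hypothesis term from the target to the source domain using Lemma 1, and finally split it again on the source domain:
\begin{align*}
\epsilon_{\mathcal{T}}(h) &= \epsilon_{\mathcal{T}}(f_{\mathcal{T}}, h) \leq \epsilon_{\mathcal{T}}(f_{\mathcal{T}}, h^*) + \epsilon_{\mathcal{T}}(h, h^*) = \epsilon_{\mathcal{T}}(h^*) + \epsilon_{\mathcal{T}}(h, h^*), \\
\epsilon_{\mathcal{T}}(h, h^*) &\leq \epsilon_{\mathcal{S}}(h, h^*) + \frac{N}{2}\hat{d}_{\mathcal{H}\Delta\mathcal{H}}(\mathcal{S}, \mathcal{T}), \\
\epsilon_{\mathcal{S}}(h, h^*) &\leq \epsilon_{\mathcal{S}}(f_{\mathcal{S}}, h) + \epsilon_{\mathcal{S}}(f_{\mathcal{S}}, h^*) = \epsilon_{\mathcal{S}}(h) + \epsilon_{\mathcal{S}}(h^*).
\end{align*}
Substituting the last two displays back into the first and grouping the two terms that depend on $h^*$ yields $\epsilon_{\mathcal{T}}(h) \leq \epsilon_{\mathcal{S}}(h) + \frac{N}{2}\hat{d}_{\mathcal{H}\Delta\mathcal{H}}(\mathcal{S}, \mathcal{T}) + \big(\epsilon_{\mathcal{S}}(h^*) + \epsilon_{\mathcal{T}}(h^*)\big)$, and the parenthesized quantity is exactly $\psi$ by \defref{def:psi_optimalvalue}, which is the claimed bound.

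The main obstacle I anticipate is a subtlety rather than a genuine difficulty: Lemma 2 is stated for three hypotheses $h_1, h_2, h_3 \in \mathcal{H}$, yet I invoke it with the labeling functions $f_{\mathcal{T}}$ and $f_{\mathcal{S}}$ in the first slot. I would justify this by noting that the proof of Lemma 2 only uses the triangle inequality for the absolute value inside the per-round sum, so it holds verbatim when any argument is a labeling function; alternatively one assumes realizability, $f_{\mathcal{D}} \in \mathcal{H}$, which the linear reward model in \defref{def:labeling} makes natural. A secondary point worth checking is that every error term in the chain is evaluated against a common, fixed set of selected actions $\{\hat{a}_i\}_{i\in[N]}$, so that the per-round summands genuinely align when Lemmas 1 and 2 are applied; I would fix the selection rule once at the outset and keep it unchanged throughout so that all the $\epsilon_{\mathcal{D}}(\cdot,\cdot)$ quantities are summed over matching rounds.
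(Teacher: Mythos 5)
Your proposal is correct and follows essentially the same argument as the paper's proof: pivot through the optimal hypothesis $h^*$ via the triangle inequality on the target domain, transfer the cross-hypothesis term $\epsilon_{\mathcal{T}}(h, h^*)$ to the source domain using the divergence bound, split again via the triangle inequality on the source domain, and recognize $\epsilon_{\mathcal{S}}(h^*) + \epsilon_{\mathcal{T}}(h^*) = \psi$. Your two flagged subtleties (applying the triangle inequality with labeling functions in one slot, and keeping the selected actions fixed across all error terms) are real issues that the paper's proof silently assumes as well, so addressing them explicitly only strengthens the write-up.
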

\begin{proof}
By definition in~\defref{def: Train_error} in main paper, we start from $\epsilon_{\mathcal{T}} (h) = \epsilon_{\mathcal{T}} (h, f_{\mathcal{T}})$, then we have
\begin{eqnarray*}
    \epsilon_{\mathcal{T}} (h) &=& \epsilon_{\mathcal{T}} (f_{\mathcal{T}}, h) \\
    &\stackrel{\mbox{Lm.~\ref{lemma:A2}}}{\leq}& \epsilon_{\mathcal{T}} (f_{\mathcal{T}}, h^{*}) + \epsilon_{\mathcal{T}} (h, h^{*}) \\ 
    &\leq& \epsilon_{\mathcal{T}} (h^{*}) + \epsilon_{\mathcal{S}} (h, h^{*}) + |\epsilon_{\mathcal{T}} (h, h^{*}) - \epsilon_{\mathcal{S}} (h, h^{*})| \\ 
    &\stackrel{\mbox{Lm.~\ref{lemma:A1}}}{\leq}& \epsilon_{\mathcal{T}} (h^{*}) + \epsilon_{\mathcal{S}} (h, h^{*}) + \dfrac{N}{2} \hat{d}_{\mathcal{H}\Delta\mathcal{H}} (\mathcal{S}, \mathcal{T}) \\ 
    &\leq& \epsilon_{\mathcal{T}} (h^{*}) + \epsilon_{\mathcal{S}} (h) + \epsilon_{\mathcal{S}} (h^{*}) + \dfrac{N}{2} \hat{d}_{\mathcal{H}\Delta\mathcal{H}} (\mathcal{S}, \mathcal{T}) \\ 
    &=& \epsilon_{\mathcal{S}} (h) + \psi + \dfrac{N}{2} \hat{d}_{\mathcal{H}\Delta\mathcal{H}} (\mathcal{S}, \mathcal{T}).
\end{eqnarray*}
\end{proof}
\begin{lemma} \label{lemma:A4}
Denoting the contexts from the target domain $\TM$ as $\{ \x_{t, a}^{\mathcal{T}} \}_{i \in [N], a \in [K]}$ and the associated ground-truth action as $\{ a_{i}^{*} \}_{i=1}^{N}$, the estimated regret (i.e., using the trained model $\hat{\theta}$ from the source domain) for the target domain is
\begin{align*}
    \sum^{N}_{i=1} \left( \left| \langle \hat{\theta}, \hat{\phi} (\x_{i, a_{i}^{*}}^{\mathcal{T}}) \rangle - \langle \hat{\theta}, \hat{\phi} (\x_{i, \hat{a}_{i}}^{\mathcal{T}}) \rangle \right| \right) &\leq \sum^{N}_{i=1} \left( \left| \langle \hat{\theta}, \hat{\phi} (\x_{i, \hat{a}_{i}}^{\mathcal{T}}) \rangle \right| \right) + \sum^{N}_{i=1} \alpha \cdot \kappa_{i}^{\mathcal{T}}, 
\end{align*}
where $\kappa_{i}^{\mathcal{T}}$ is $\max \left\{ 2 \left\| \hat{\phi} (\x_{i, \hat{a}_{i}}^{\mathcal{T}}) \right\|_{A_{i-1}^{-1}}, \left\| \hat{\phi} (\x_{i, a_{i}^{*}}^{\mathcal{T}}) \right\|_{A_{i-1}^{-1}} + \left\| \hat{\phi} (\x_{i, \hat{a}_{i}}^{\mathcal{T}}) \right\|_{A_{i-1}^{-1}} \right\}$.
\end{lemma}
\begin{proof}
By definition, we have
\begingroup\makeatletter\def\f@size{8.5}\check@mathfonts
\def\maketag@@@#1{\hbox{\m@th\normalsize\normalfont#1}}%
\begin{align}
    &\sum^{N}_{i=1} \left( \left| \langle \hat{\theta}, \hat{\phi} (\x_{i, a_{i}^{*}}^{\mathcal{T}}) \rangle - \langle \hat{\theta}, \hat{\phi} (\x_{i, \hat{a}_{i}}^{\mathcal{T}}) \rangle \right| \right) \nonumber \\
    &= \sum^{N}_{i=1} \left( \left| \langle \hat{\theta}, \hat{\phi} (\x_{i, a_{i}^{*}}^{\mathcal{T}}) \rangle - \langle \hat{\theta}, \hat{\phi} (\x_{i, \hat{a}_{i}}^{\mathcal{T}}) \rangle + \alpha \left\| \hat{\phi} (\x_{i, \hat{a}_{i}}^{\mathcal{T}}) \right\|_{A_{i-1}^{-1}} - \alpha \left\| \hat{\phi} (\x_{i, \hat{a}_{i}}^{\mathcal{T}}) \right\|_{A_{i-1}^{-1}} + \alpha \left\| \hat{\phi} (\x_{i, a_{i}^{*}}^{\mathcal{T}}) \right\|_{A_{i-1}^{-1}} - \alpha \left\| \hat{\phi} (\x_{i, a_{i}^{*}}^{\mathcal{T}}) \right\|_{A_{i-1}^{-1}} \right| \right) \nonumber \\
    &= \sum^{N}_{i=1} \left( \left| \left( \langle \hat{\theta}, \hat{\phi} (\x_{i, a_{i}^{*}}^{\mathcal{T}}) \rangle + \alpha \left\| \hat{\phi} (\x_{i, a_{i}^{*}}^{\mathcal{T}}) \right\|_{A_{i-1}^{-1}} \right) - \left( \langle \hat{\theta}, \hat{\phi} (\x_{i, \hat{a}_{i}}^{\mathcal{T}}) \rangle + \alpha \left\| \hat{\phi} (\x_{i, \hat{a}_{i}}^{\mathcal{T}}) \right\|_{A_{i-1}^{-1}} \right) \right| \right) \nonumber \\
    &\quad + \sum^{N}_{i=1} \alpha \left( \left| \left\| \hat{\phi} (\x_{i, \hat{a}_{i}}^{\mathcal{T}}) \right\|_{A_{i-1}^{-1}} - \left\| \hat{\phi} (\x_{i, a_{i}^{*}}^{\mathcal{T}}) \right\|_{A_{i-1}^{-1}} \right| \right) \nonumber \\
    &\leq \sum^{N}_{i=1} \left( \left| \left( \langle \hat{\theta}, \hat{\phi} (\x_{i, a_{i}^{*}}^{\mathcal{T}}) \rangle + \alpha \left\| \hat{\phi} (\x_{i, a_{i}^{*}}^{\mathcal{T}}) \right\|_{A_{i-1}^{-1}} \right) - \left( \langle \hat{\theta}, \hat{\phi} (\x_{i, \hat{a}_{i}}^{\mathcal{T}}) \rangle + \alpha \left\| \hat{\phi} (\x_{i, \hat{a}_{i}}^{\mathcal{T}}) \right\|_{A_{i-1}^{-1}} \right) \right| \right) \nonumber \\
    &\quad + \sum^{N}_{i=1} \alpha  \max \left\{ \left\| \hat{\phi} (\x_{i, \hat{a}_{i}}^{\mathcal{T}}) \right\|_{A_{i-1}^{-1}}, \left\| \hat{\phi} (\x_{i, a_{i}^{*}}^{\mathcal{T}}) \right\|_{A_{i-1}^{-1}} \right\} \label{eq:Rt_ineq2_sub1} \\
    &\leq \sum^{N}_{i=1} \left( \left| \left( \langle \hat{\theta}, \hat{\phi} (\x_{i, \hat{a}_{i}}^{\mathcal{T}}) \rangle + \alpha \left\| \hat{\phi} (\x_{i, \hat{a}_{i}}^{\mathcal{T}}) \right\|_{A_{i-1}^{-1}} \right) \right| \right) + \sum^{N}_{i=1} \alpha  \max \left\{ \left\| \hat{\phi} (\x_{i, \hat{a}_{i}}^{\mathcal{T}}) \right\|_{A_{i-1}^{-1}}, \left\| \hat{\phi} (\x_{i, a_{i}^{*}}^{\mathcal{T}}) \right\|_{A_{i-1}^{-1}} \right\} \label{eq:Rt_ineq2_sub2} \\
    &\leq \sum^{N}_{i=1} \left( \left| \langle \hat{\theta}, \hat{\phi} (\x_{i, \hat{a}_{i}}^{\mathcal{T}}) \rangle \right| \right) + \sum^{N}_{i=1} \alpha \cdot \max \left\{ 2 \left\| \hat{\phi} (\x_{i, \hat{a}_{i}}^{\mathcal{T}}) \right\|_{A_{i-1}^{-1}}, \left\| \hat{\phi} (\x_{i, a_{i}^{*}}^{\mathcal{T}}) \right\|_{A_{i-1}^{-1}} + \left\| \hat{\phi} (\x_{i, \hat{a}_{i}}^{\mathcal{T}}) \right\|_{A_{i-1}^{-1}} \right\} \nonumber \\
    &\coloneq \sum^{N}_{i=1} \left( \left| \langle \hat{\theta}, \hat{\phi} (\x_{i, \hat{a}_{i}}^{\mathcal{T}}) \rangle \right| \right) + \sum^{N}_{i=1} \alpha \cdot \kappa_{i}^{\mathcal{T}}, \label{eq:Rt_ineq2_sub3}
\end{align}
\endgroup
where the inequality in~\eqnref{eq:Rt_ineq2_sub1} is based on the fact that for any $a, b > 0$, $\left| a - b \right| \leq \max \{ a, b \}$. The inequality in~\eqnref{eq:Rt_ineq2_sub2} is derived from the definition of $\hat{a}_i$, since $\hat{a}_i$ is selected from the maximum value of $\left( \langle \hat{\theta}, \hat{\phi} (\x_{i, a}^{\mathcal{T}}) \rangle + \alpha \left\| \hat{\phi} (\x_{i, a}^{\mathcal{T}}) \right\|_{A_{i-1}^{-1}} \right)$, therefore we guarantee that $\left( \langle \hat{\theta}, \hat{\phi} (\x_{i, \hat{a}_{i}}^{\mathcal{T}}) \rangle + \alpha \left\| \hat{\phi} (\x_{i, \hat{a}_{i}}^{\mathcal{T}}) \right\|_{A_{i-1}^{-1}} \right) \geq \left( \langle \hat{\theta}, \hat{\phi} (\x_{i, a_{i}^{*}}^{\mathcal{T}}) \rangle + \alpha \left\| \hat{\phi} (\x_{i, a_{i}^{*}}^{\mathcal{T}}) \right\|_{A_{i-1}^{-1}} \right)$. In addition, ~\eqnref{eq:Rt_ineq2_sub3} holds by definition as we define $\kappa_{i}^{\mathcal{T}}$ as $\max \left\{ 2 \left\| \hat{\phi} (\x_{i, \hat{a}_{i}}^{\mathcal{T}}) \right\|_{A_{i-1}^{-1}}, \left\| \hat{\phi} (\x_{i, a_{i}^{*}}^{\mathcal{T}}) \right\|_{A_{i-1}^{-1}} + \left\| \hat{\phi} (\x_{i, \hat{a}_{i}}^{\mathcal{T}}) \right\|_{A_{i-1}^{-1}} \right\}$.
\end{proof}

Next, we discuss the property of $\kappa_i^{\mathcal{T}}$ in the following lemma.
\begin{lemma} \label{lemma:A5}
For any arbitrary domain $\mathcal{D}$, we denote 

$\kappa_{i}^{\mathcal{D}} = \max \left\{ 2 \left\| \hat{\phi} (\x_{i, \hat{a}_{i}}^{\mathcal{D}}) \right\|_{A_{i-1}^{-1}}, \left\| \hat{\phi} (\x_{i, a_{i}^{*}}^{\mathcal{D}}) \right\|_{A_{i-1}^{-1}} + \left\| \hat{\phi} (\x_{i, \hat{a}_{i}}^{\mathcal{D}}) \right\|_{A_{i-1}^{-1}} \right\}$. Then, by our DABand algorithm, as the time step $i \to \infty$, we have $\kappa_{i}^{\mathcal{D}} \to 0$. 
\end{lemma}
\begin{proof}
By definition of $A$, its singular value increase as $i$ increase, which means that the singular values in its inverse matrix, $A^{-1}$, will decrease to $0$. Therefore, for any new array $u$, $||u||_{A^{-1}} = \sqrt{u A^{-1} u^T} \to 0$. Then when $i \to \infty$, we have 
\begin{align*}
    \lim_{i \to \infty} \kappa_{i}^{\mathcal{D}} &= \max \left\{ 2 \left\| \hat{\phi} (\x_{i, \hat{a}_{i}}^{\mathcal{D}}) \right\|_{A_{i-1}^{-1}}, \left\| \hat{\phi} (\x_{i, a_{i}^{*}}^{\mathcal{D}}) \right\|_{A_{i-1}^{-1}} + \left\| \hat{\phi} (\x_{i, \hat{a}_{i}}^{\mathcal{D}}) \right\|_{A_{i-1}^{-1}} \right\} \\
    &\approx 0. 
\end{align*}
\end{proof}
\begin{lemma} \label{lemma:A6}
Denoting the contexts from the source domain $\SM$ as $\{ \x_{t, a}^{\mathcal{S}} \}_{i \in [N], a \in [K]}$, the associated ground-truth action as $\{ a_{i}^{*} \}_{i=1}^{N}$, we have the following inequality:
\begin{align*}
    \sum^{N}_{i=1} \left( \left| \langle \hat{\theta}, \hat{\phi} (\x_{i, a_{i}^{*}}^{\mathcal{S}}) \rangle - \langle \theta^{*}, \phi^{*} (\x_{i, \hat{a}_{i}}^{\mathcal{S}}) \rangle \right| \right) \leq \epsilon_{\mathcal{S}} (h) + \sum^{N}_{i=1} \mathbbm{1}{[a_{i}^{*} \neq \hat{a}_{i}]} \left( \left( \left| \langle \hat{\theta}, \hat{\phi} (\x_{i, \hat{a}_{i}}^{\mathcal{S}}) \rangle \right| \right) + \alpha \cdot \kappa_{i}^{\mathcal{S}} \right),
\end{align*}
where $\kappa_{i}^{\mathcal{S}}$ is defined in~\lemref{lemma:A5}.
\end{lemma}
\begin{proof}
By definition, we have
\begingroup\makeatletter\def\f@size{8}\check@mathfonts
\def\maketag@@@#1{\hbox{\m@th\normalsize\normalfont#1}}%
\begin{eqnarray*}
    &\enspace& \sum^{N}_{i=1} \left( \left| \langle \hat{\theta}, \hat{\phi} (\x_{i, a_{i}^{*}}^{\mathcal{S}}) \rangle - \langle \theta^{*}, \phi^{*} (\x_{i, \hat{a}_{i}}^{\mathcal{S}}) \rangle \right| \right) \\
    &=& \sum^{N}_{i=1} \left[ \mathbbm{1}{[a_{i}^{*} = \hat{a}_{i}]} \left( \left| \langle \hat{\theta}, \hat{\phi} (\x_{i, a_{i}^{*}}^{\mathcal{S}}) \rangle - \langle \theta^{*}, \phi^{*} (\x_{i, \hat{a}_{i}}^{\mathcal{S}}) \rangle \right| \right) + \mathbbm{1}{[a_{i}^{*} \neq \hat{a}_{i}]} \left( \left| \langle \hat{\theta}, \hat{\phi} (\x_{i, a_{i}^{*}}^{\mathcal{S}}) \rangle - \langle \theta^{*}, \phi^{*} (\x_{i, \hat{a}_{i}}^{\mathcal{S}}) \rangle \right| \right) \right] \\
    &\leq& \sum^{N}_{i=1} \left[ \mathbbm{1}{[a_{i}^{*} = \hat{a}_{i}]} \left( \left| \langle \hat{\theta}, \hat{\phi} (\x_{i, \hat{a}_{i}}^{\mathcal{S}}) \rangle - 1 \right| \right) + \mathbbm{1}{[a_{i}^{*} \neq \hat{a}_{i}]} \left( \left| \langle \hat{\theta}, \hat{\phi} (\x_{i, a_{i}^{*}}^{\mathcal{S}}) \rangle - 0 \right| \right) \right] \\
    &\leq& \sum^{N}_{i=1} \left[ \mathbbm{1}{[a_{i}^{*} = \hat{a}_{i}]} \left( \left| \langle \hat{\theta}, \hat{\phi} (\x_{i, \hat{a}_{i}}^{\mathcal{S}}) \rangle - 1 \right| \right) + \mathbbm{1}{[a_{i}^{*} \neq \hat{a}_{i}]} \left( \left| \langle \hat{\theta}, \hat{\phi} (\x_{i, a_{i}^{*}}^{\mathcal{S}}) \rangle + \langle \hat{\theta}, \hat{\phi} (\x_{i, \hat{a}_i}^{\mathcal{S}}) \rangle - \langle \hat{\theta}, \hat{\phi} (\x_{i, \hat{a}_i}^{\mathcal{S}}) \rangle \right| \right) \right] \\
    &\leq& \sum^{N}_{i=1} \left[ \mathbbm{1}{[a_{i}^{*} = \hat{a}_{i}]} \left( \left| \langle \hat{\theta}, \hat{\phi} (\x_{i, \hat{a}_{i}}^{\mathcal{S}}) \rangle - 1 \right| \right)  + \mathbbm{1}{[a_{i}^{*} \neq \hat{a}_{i}]} \left( \left| \langle \hat{\theta}, \hat{\phi} (\x_{i, \hat{a}_i}^{\mathcal{S}}) \rangle - 0 \right| \right) + \mathbbm{1}{[a_{i}^{*} \neq \hat{a}_{i}]} \left( \left| \langle \hat{\theta}, \hat{\phi} (\x_{i, a_{i}^{*}}^{\mathcal{S}}) \rangle - \langle \hat{\theta}, \hat{\phi} (\x_{i, \hat{a}_i}^{\mathcal{S}}) \rangle \right| \right) \right] \\
    &\stackrel{\mbox{L.m.~\ref{lemma:A4}}}{\leq}& \sum_{i=1}^{N} \left( \left| \langle \theta_{\mathcal{S}}^{*}, \phi_{\mathcal{S}}^{*} (\x_{i, \hat{a}_i}^{\mathcal{S}} ) \rangle - \langle \hat{\theta}, \hat{\phi} (\x_{i, \hat{a}_i}^{\mathcal{S}} ) \rangle \right| \right) + \sum^{N}_{i=1} \mathbbm{1}{[a_{i}^{*} \neq \hat{a}_{i}]} \left( \left| \langle \hat{\theta}, \hat{\phi} (\x_{i, \hat{a}_{i}}^{\mathcal{S}}) \rangle \right| + \alpha \cdot \kappa_{i}^{\mathcal{S}} \right) \\
    &\equiv& \epsilon_{\mathcal{S}} (h) + \sum^{N}_{i=1} \mathbbm{1}{[a_{i}^{*} \neq \hat{a}_{i}]} \left( \left| \langle \hat{\theta}, \hat{\phi} (\x_{i, \hat{a}_{i}}^{\mathcal{S}}) \rangle \right| + \alpha \cdot \kappa_{i}^{\mathcal{S}} \right), 
\end{eqnarray*}
\endgroup
\end{proof}
\subsection{Proof for the Target Regret Bound}
% + \textcolor{Lavender}{\underbrace{\sum^{N}_{i=1} \alpha \cdot \kappa_{i}^{\mathcal{T}} + \mathbbm{1}{[a_{i}^{*} \neq \hat{a}_{i}]} \left( \sum^{N}_{i=1} \alpha \cdot \kappa_{i}^{\mathcal{S}} \right)}_{Small Numbers}}
\begin{theorem}[\textbf{Target Regret Bound}] 
{Denoting the contexts from the source domain $\SM$ as $\{ \x_{t, a}^{\mathcal{S}} \}_{i \in [N], a \in [K]}$, the associated ground-truth action as $\{ a_{i}^{*} \}_{i=1}^{N}$, and the contexts from the target domain $\TM$ as $\{ \x_{t, a}^{\mathcal{T}} \}_{i \in [N], a \in [K]}$, the upper bound for our target regret $R_{\TM}$ is }
\begingroup\makeatletter\def\f@size{9}\check@mathfonts
\def\maketag@@@#1{\hbox{\m@th\normalsize\normalfont#1}}%
\begin{align}
    \mathit{R}_{\mathcal{T}} &\triangleq \sum_{i=1}^{N} \Bigg( \Big| \langle \theta_{\mathcal{T}}^{*}, \phi_{\mathcal{T}}^{*} (\x_{i, a_i^{*}}^{\mathcal{T}}) \rangle - \langle \theta_{\mathcal{T}}^{*}, \phi_{\mathcal{T}}^{*} (\x_{i, \hat{a}_i} ^{\mathcal{T}}) \rangle \Big| \Bigg) \nonumber \\
    &\leq \textcolor{ForestGreen} {\underbrace{\mathit{R}_{\mathcal{S}}}_{Source~Regret}} + \textcolor{OrangeRed} {\underbrace{2 \cdot \epsilon_{\mathcal{S}} (h)}_{Regression~Error}} + \textcolor{RoyalBlue} {\underbrace{ N \cdot \hat{d}_{\mathcal{H}\Delta\mathcal{H}} (\mathcal{S}, \mathcal{T})}_{Data~Divergence}} + \textcolor{BlueGreen} {\underbrace{\psi + C}_{Constant}} \nonumber \\
    &\quad + \textcolor{YellowOrange} {\underbrace{\sum^{N}_{i=1} \left( \left| \langle \hat{\theta}, \hat{\phi} (\x_{i, \hat{a}_{i}}^{\mathcal{T}}) \rangle \right| \right) + \sum^{N}_{i=1}\mathbbm{1}{[a_{i}^{*} \neq \hat{a}_{i}]} \left(  \left( \left| \langle \hat{\theta}, \hat{\phi} (\x_{i, \hat{a}_{i}}^{\mathcal{S}}) \rangle \right| \right) \right) }_{Predicted~Rewards}}, \nonumber
\end{align}
\endgroup
{where $\mathit{R}_{\mathcal{S}}$, $\epsilon_{\mathcal{S}} (h)$, and $\hat{d}_{\mathcal{H}\Delta\mathcal{H}}$ are the source regret, source-domain error, and ${\mathcal{H}\Delta\mathcal{H}}$ divergence defined in~\defref{def:source_regret}, ~\defref{def: Train_error}, and \defref{def:h_delta_h}, respectively. $\psi$ is a constant independent to the problem and and $C$ is a constant which can be ignored.}
\end{theorem}
\begin{proof}
Please check the full proof in the next page.
\newpage
By the definition of the Target Regret Bound, we have
%\left( \left|  \right| \right)
\begingroup\makeatletter\def\f@size{9.5}\check@mathfonts
\def\maketag@@@#1{\hbox{\m@th\normalsize\normalfont#1}}%
\begin{eqnarray*}
    \mathit{R}_{\mathcal{T}} &=& \sum^{N}_{i=1} \left( \left|  \langle \theta^{*}, \phi^{*} (\x_{i, a_{i}^{*}}^{\mathcal{T}}) \rangle - \langle \theta^{*}, \phi^{*} (\x_{i, \hat{a}_{i}}^{\mathcal{T}}) \rangle \right| \right) \\
    &=& \sum^{N}_{i=1} \left( \left| \langle \theta^{*}, \phi^{*} (\x_{i, a_{i}^{*}}^{\mathcal{T}}) \rangle - \langle \theta^{*}, \phi^{*} (\x_{i, \hat{a}_{i}}^{\mathcal{T}}) \rangle + \langle \hat{\theta}, \hat{\phi} (\x_{i, \hat{a}_{i}}^{\mathcal{T}}) \rangle - \langle \hat{\theta}, \hat{\phi} (\x_{i, \hat{a}_{i}}^{\mathcal{T}}) \rangle \right| \right)\\
    &\leq& \sum^{N}_{i=1} \left( \left| \langle \theta^{*}, \phi^{*} (\x_{i, \hat{a}_{i}}^{\mathcal{T}}) \rangle - \langle \hat{\theta}, \hat{\phi} (\x_{i, \hat{a}_{i}}^{\mathcal{T}}) \rangle \right| \right) + \sum^{N}_{i=1} \left( \left| \langle \theta^{*}, \phi^{*} (\x_{i, a_{i}^{*}}^{\mathcal{T}}) \rangle - \langle \hat{\theta}, \hat{\phi} (\x_{i, \hat{a}_{i}}^{\mathcal{T}}) \rangle \right| \right) \\
    &\leq& \textcolor{BlueGreen} {\epsilon_{\mathcal{T}} (h)} + \sum^{N}_{i=1} \left( \left| \langle \theta^{*}, \phi^{*} (\x_{i, a_{i}^{*}}^{\mathcal{T}}) \rangle - \langle \hat{\theta}, \hat{\phi} (\x_{i, \hat{a}_{i}}^{\mathcal{T}}) \rangle + \langle \hat{\theta}, \hat{\phi} (\x_{i, a_{i}^{*}}^{\mathcal{T}}) \rangle - \langle \hat{\theta}, \hat{\phi} (\x_{i, a_{i}^{*}}^{\mathcal{T}}) \rangle \right| \right) \\
    &\leq& \textcolor{BlueGreen} {\epsilon_{\mathcal{T}} (h)} + \textcolor{YellowOrange} {\sum^{N}_{i=1} \left( \left| \langle \hat{\theta}, \hat{\phi} (\x_{i, a_{i}^{*}}^{\mathcal{T}}) \rangle - \langle \hat{\theta}, \hat{\phi} (\x_{i, \hat{a}_{i}}^{\mathcal{T}}) \rangle \right| \right)} + \textcolor{LimeGreen} {\sum^{N}_{i=1} \left( \left| \langle \theta^{*}, \phi^{*} (\x_{i, a_{i}^{*}}^{\mathcal{T}}) \rangle - \langle \hat{\theta}, \hat{\phi} (\x_{i, a_{i}^{*}}^{\mathcal{T}}) \rangle \right| \right)} \\
    &\leq& \textcolor{BlueGreen} {\epsilon_{\mathcal{T}} (h)} + \textcolor{YellowOrange} {\sum^{N}_{i=1} \left( \left| \langle \hat{\theta}, \hat{\phi} (\x_{i, a_{i}^{*}}^{\mathcal{T}}) \rangle - \langle \hat{\theta}, \hat{\phi} (\x_{i, \hat{a}_{i}}^{\mathcal{T}}) \rangle \right| \right)} + \textcolor{LimeGreen} {\sum^{N}_{i=1} \left( \left| \langle \theta^{*}, \phi^{*} (\x_{i, a_{i}^{*}}^{\mathcal{T}}) \rangle - \langle \hat{\theta}, \hat{\phi} (\x_{i, a_{i}^{*}}^{\mathcal{T}}) \rangle \right| \right)} \\
    &\quad& + \textcolor{Lavender} {\sum^{N}_{i=1} \left( \left| \langle \theta^{*}, \phi^{*} (\x_{i, a_{i}^{*}}^{\mathcal{S}}) \rangle - \langle \hat{\theta}, \hat{\phi} (\x_{i, a_{i}^{*}}^{\mathcal{S}}) \rangle \right| \right)} - \textcolor{NavyBlue} {\sum^{N}_{i=1} \left( \left| \langle \theta^{*}, \phi^{*} (\x_{i, a_{i}^{*}}^{\mathcal{S}}) \rangle - \langle \hat{\theta}, \hat{\phi} (\x_{i, a_{i}^{*}}^{\mathcal{S}}) \rangle \right| \right)} \\
    &\leq& \textcolor{BlueGreen} {\epsilon_{\mathcal{T}} (h)} + \textcolor{YellowOrange} {\sum^{N}_{i=1} \left( \left| \langle \hat{\theta}, \hat{\phi} (\x_{i, a_{i}^{*}}^{\mathcal{T}}) \rangle - \langle \hat{\theta}, \hat{\phi} (\x_{i, \hat{a}_{i}}^{\mathcal{T}}) \rangle \right| \right)} + \textcolor{Lavender} {\sum^{N}_{i=1} \left( \left| \langle \theta^{*}, \phi^{*} (\x_{i, a_{i}^{*}}^{\mathcal{S}}) \rangle - \langle \hat{\theta}, \hat{\phi} (\x_{i, a_{i}^{*}}^{\mathcal{S}}) \rangle \right| \right)} \\
    &\quad& + \Bigg| \textcolor{LimeGreen} {\sum^{N}_{i=1} \left( \left| \langle \theta^{*}, \phi^{*} (\x_{i, a_{i}^{*}}^{\mathcal{T}}) \rangle - \langle \hat{\theta}, \hat{\phi} (\x_{i, a_{i}^{*}}^{\mathcal{T}}) \rangle \right| \right)} - \textcolor{NavyBlue} {\sum^{N}_{i=1} \left( \left| \langle \theta^{*}, \phi^{*} (\x_{i, a_{i}^{*}}^{\mathcal{S}}) \rangle - \langle \hat{\theta}, \hat{\phi} (\x_{i, a_{i}^{*}}^{\mathcal{S}}) \rangle\right| \right)} \Bigg| \\
    &\leq& \textcolor{BlueGreen} {\epsilon_{\mathcal{T}} (h)} + \dfrac{N}{2} \hat{d}_{\mathcal{H}\Delta\mathcal{H}} (\mathcal{S}, \mathcal{T}) + \textcolor{YellowOrange} {\sum^{N}_{i=1} \left( \left| \langle \hat{\theta}, \hat{\phi} (\x_{i, a_{i}^{*}}^{\mathcal{T}}) \rangle - \langle \hat{\theta}, \hat{\phi} (\x_{i, \hat{a}_{i}}^{\mathcal{T}}) \rangle \right| \right)} \\
    &\quad& + \textcolor{Lavender} {\sum^{N}_{i=1} \left( \left| \langle \theta^{*}, \phi^{*} (\x_{i, a_{i}^{*}}^{\mathcal{S}}) \rangle - \langle \hat{\theta}, \hat{\phi} (\x_{i, a_{i}^{*}}^{\mathcal{S}}) \rangle + \langle \theta^{*}, \phi^{*} (\x_{i, \hat{a}_{i}}^{\mathcal{S}}) \rangle - \langle \theta^{*}, \phi^{*} (\x_{i, \hat{a}_{i}}^{\mathcal{S}}) \rangle \right| \right)} \\
    &\stackrel{\mbox{L.m..~\ref{lemma:A3}}}{\leq}& \textcolor{BlueGreen} {\epsilon_{\mathcal{S}} (h) + \psi + \dfrac{N}{2} \hat{d}_{\mathcal{H}\Delta\mathcal{H}} (\mathcal{S}, \mathcal{T})} + \dfrac{N}{2} \hat{d}_{\mathcal{H}\Delta\mathcal{H}} (\mathcal{S}, \mathcal{T}) + \textcolor{YellowOrange} {\sum^{N}_{i=1} \left( \left| \langle \hat{\theta}, \hat{\phi} (\x_{i, a_{i}^{*}}^{\mathcal{T}}) \rangle - \langle \hat{\theta}, \hat{\phi} (\x_{i, \hat{a}_{i}}^{\mathcal{T}}) \rangle \right| \right)} \\ 
    &\quad& + \textcolor{ForestGreen} {\sum^{N}_{i=1} \left( \left| \langle \theta^{*}, \phi^{*} (\x_{i, a_{i}^{*}}^{\mathcal{S}}) \rangle - \langle \theta^{*}, \phi^{*} (\x_{i, \hat{a}_{i}}^{\mathcal{S}}) \rangle \right| \right)} + \textcolor{OrangeRed} {\sum^{N}_{i=1} \left( \left| \langle \hat{\theta}, \hat{\phi} (\x_{i, a_{i}^{*}}^{\mathcal{S}}) \rangle - \langle \theta^{*}, \phi^{*} (\x_{i, \hat{a}_{i}}^{\mathcal{S}}) \rangle \right| \right)} \\
    &\stackrel{\mbox{L.m.~\ref{lemma:A4}}}{\leq}& \epsilon_{\mathcal{S}} (h) + \psi + N \hat{d}_{\mathcal{H}\Delta\mathcal{H}} (\mathcal{S}, \mathcal{T}) + \textcolor{YellowOrange} {\sum^{N}_{i=1} \left( \left| \langle \hat{\theta}, \hat{\phi} (\x_{i, \hat{a}_{i}}^{\mathcal{T}}) \rangle \right| \right) + \sum^{N}_{i=1} \alpha \cdot \kappa_{i}^{\mathcal{T}}} \\
    &\quad& + \textcolor{ForestGreen}{\mathit{R}_{S}} + \textcolor{OrangeRed} {\sum^{N}_{i=1} \left( \left| \langle \hat{\theta}, \hat{\phi} (\x_{i, a_{i}^{*}}^{\mathcal{S}}) \rangle - \langle \theta^{*}, \phi^{*} (\x_{i, \hat{a}_{i}}^{\mathcal{S}}) \rangle \right| \right)} \\
    &\stackrel{\mbox{L.m.~\ref{lemma:A6}}}{\leq}& \epsilon_{\mathcal{S}} (h) + \psi + N \hat{d}_{\mathcal{H}\Delta\mathcal{H}} (\mathcal{S}, \mathcal{T}) + \sum^{N}_{i=1} \left( \left| \langle \hat{\theta}, \hat{\phi} (\x_{i, \hat{a}_{i}}^{\mathcal{T}}) \rangle \right| \right) + \sum^{N}_{i=1} \alpha \cdot \kappa_{i}^{\mathcal{T}} \\
    &\quad& + \mathit{R}_{S} + \textcolor{OrangeRed} {\epsilon_{\mathcal{S}} (h) + \sum^{N}_{i=1} \mathbbm{1}{[a_{i}^{*} \neq \hat{a}_{i}]} \left( \left| \langle \hat{\theta}, \hat{\phi} (\x_{i, \hat{a}_{i}}^{\mathcal{S}}) \rangle \right| + \alpha \cdot \kappa_{i}^{\mathcal{S}} \right)} \\
    &\stackrel{\mbox{L.m.~\ref{lemma:A5}}}{=}& \textcolor{ForestGreen} {\underbrace{\mathit{R}_{\mathcal{S}}}_{Source~Regret}} + \textcolor{OrangeRed} {\underbrace{2 \cdot \epsilon_{\mathcal{S}} (h)}_{Regression~Error}} + \textcolor{RoyalBlue} {\underbrace{ N \cdot \hat{d}_{\mathcal{H}\Delta\mathcal{H}} (\mathcal{S}, \mathcal{T})}_{Data~Divergence}} + \textcolor{BlueGreen} {\underbrace{\psi +C}_{Constant}} \\
    &\quad& + \textcolor{YellowOrange} {\underbrace{\sum^{N}_{i=1} \left( \left| \langle \hat{\theta}, \hat{\phi} (\x_{i, \hat{a}_{i}}^{\mathcal{T}}) \rangle \right| \right) + \mathbbm{1}{[a_{i}^{*} \neq \hat{a}_{i}]} \left( \sum^{N}_{i=1} \left( \left| \langle \hat{\theta}, \hat{\phi} (\x_{i, \hat{a}_{i}}^{\mathcal{S}}) \rangle \right| \right) \right) }_{Predicted~Rewards}},
\end{eqnarray*}
\endgroup
{where $C = \sum^{N}_{i=1} \alpha \cdot \kappa_{i}^{\mathcal{T}} + \mathbbm{1}{[a_{i}^{*} \neq \hat{a}_{i}]} \left( \sum^{N}_{i=1} \alpha \cdot \kappa_{i}^{\mathcal{S}} \right)$ is a small number which can be ignored as $i \to \infty$ (i.e., see~\lemref{lemma:A5}).}
\end{proof}
\section{Datasets} \label{sec:app-dataset}
To evaluate the effectiveness of our DABand, for each dataset, we treat the “easy” domain as the low-cost source domain and the more challenging one as the high-cost target domain. This allows us to demonstrate the efficacy of our method by adapting from a simpler to a more complex domain within a contextual bandit setting.

\emph{DIGIT.} Our DIGIT dataset consists of MNIST and MNIST-M. MNIST is a gray-scale hand-written digit dataset, while MNIST-M~\citep{ganin2016domain} features color digits. In this paper, MNIST serves as our source domain, and MNIST-M as our target domain, offering a more challenging adaptation path. The MNIST digits are gray-scale and uniform in size, aspect ratio, and intensity range, in stark contrast to the colorful and varied digits of MNIST-M. Therefore, adapting from MNIST to MNIST-M presents a greater challenge than adapting from MNIST-M to MNIST, where the target domain is less complex.

\emph{VisDA17.} The VisDA-2017 image classification challenge~\citep{peng2017visda} addresses a domain adaptation problem across 12 classes, involving three distinct datasets. The training set consists of 3D rendering images, whereas the validation and test sets feature real images from the COCO~\citep{lin2014microsoft} and YouTube Bounding Boxes~\citep{real2017youtube} datasets, respectively. Ground truth labels were provided only for the training and validation sets. Scores for the test set were calculated by a server operated by the competition organizers. For our purposes, we use the training set as the source domain and the validation set as the target domain.

{\emph{S2RDA49.} The S2RDA49 (Synthetic-to-Real) is a new benchmark dataset~\citep{tang2023new} constructed in 2023. This dataset contains $49$ classes. The source domain (i.e., the synthetic domain) is synthesized by rendering 3D models from ShapeNet~\citep{chang2015shapenet}. The used 3D models are in the same label space as the target/real domain, and
each class has 12K rendered RGB images. The target domain (i.e., the real domain) of S2RDA49 contains $60535$ images from $49$ classes, collected from the ImageNet validation set~\citep{deng2009imagenet}, ObjectNet~\citep{barbu2019objectnet}, VisDA2017 validation set~\citep{peng2017visda}, and the web. In this paper, we select the only $10$ class that matches the VisDA17 dataset.} 

\section{Baselines} \label{sec:app-baseline}
We compare our DABand with both classic and state-of-the-art contextual bandit algorithms. We start with \textbf{LinUCB}~\citep{li2010contextual}, a typical baseline for linear contextual bandit problems. To enable comprehensive comparisons, we extend our evaluation to include LinUCB augmented with pre-processed features using principle component analysis (PCA), referred to as \textbf{LinUCB-P}. In LinUCB-P, we first apply PCA to fit on training data (without labels/feedback) in both source and target domain. Then, with transformed, lower-dimensional context data, LinUCB is performed to see whether this pre-alignment procedure can transfer the knowledge to the target domain. Another pivotal baseline in our study is Neural-LinUCB~\citep{xu2020neural} (\textbf{NLinUCB}), which utilizes several layers of fully connected neural networks to dynamically process the original features at the beginning of every iteration. Similar to LinUCB-P, we introduce a NLinUCB variant that incorporates PCA, i.e., \textbf{NLinUCB-P}. {Note that domain adaptation baselines are \textbf{not applicable} to our setting. Specifically, domain adaptation methods only work in offline settings, and assume complete observability of labels in the source domain. In contrast, contextual bandit is an online setting where the oracle is revealed \textbf{only when correctly predicted}. Therefore domain adaptation methods are not applicable to our online bandit settings.}

\section{Implementation Details} \label{sec:app-imple}
In this section, we provide detailed insights into the implementation of our approach, applied to two distinct datasets: DIGIT and VisDA17. We use Pytorch to implement our method, and all experiments are run on servers with NVIDA A5000 GPUs.

\textbf{DIGIT.} Within the DIGIT dataset framework, we use the MNIST dataset as the source domain and MNIST-M as the target domain. To ensure compatibility between the datasets, we standardize the channel size of images in the source domain ($c_{\mathcal{S}} = 1$) to align with that in the target domain ($c_{\mathcal{T}} = 3$). Each image undergoes normalization and is resized to $28\times 28$ pixels with 3 channels to accommodate the format requirements of both domains. Then, an encoder is utilized to diminish the data's dimensionality to a more manageable latent space. Following this reduction, the data is processed through two fully connected neural network layers, ending in the final latent space necessary for loss computation as delineated in main paper. For the optimal hyperparameters, we set the learning rate to $1\times e^{-5}$, with $\lambda$ is chosen from $\{1.0, 5.0, 10.0, 15.0, 20.0 \}$ and %finetuned to 15
kept the same for all experiments. Additionally, we set the exploration rate $\alpha$ to $0.05$.

\textbf{VisDA17.} We use the VisDA17 dataset's training set as the source domain, with the validation set functioning as the target domain. We adhere to preprocessing steps established by~\citep{prabhu2021sentry} to ensure uniformity across domains: Each image is normalized and resized to $224\times 224$ pixels with 3 channels, matching the requisite specifications for both source and target domains. For hyperparameters, the learning rate of $1\times e^{-5}$ is applied, with $\lambda$ is chosen from $\{1.0, 5.0, 10.0, 15.0, 20.0 \}$ and then kept the same for all experiments. %fine-tuned to $10.0$ 
We set the exploration rate $\alpha$ to $0.05$.

{\textbf{S2RDA49.} We selects $10$ classes from the original $49$ classes since it matches the target domain samples in VisDA17~\citep{peng2017visda}. We adhere to preprocessing steps established by~\citep{prabhu2021sentry} to ensure uniformity across domains: Each image is normalized and resized to $224\times 224$ pixels with 3 channels. For hyperparameters, the learning rate of $1\times e^{-3}$ is applied, with $\lambda$ chosen from $\{1.0, 5.0, 10.0, 15.0, 20.0 \}$ and then kept the same for all experiments. %fine-tuned to $10.0$ 
We set the exploration rate $\alpha$ to $0.01$.}

\begin{table*}[ht]
\setlength{\tabcolsep}{4pt}
\caption{Results of the ablation studies in terms of accuracy (higher is better). Note that the accuracy $ACC = 1-\frac{1}{N}R_{\TM}$, where $R_{\TM}$ is the target regret. ``R'', ``P'' and ``D'' are short for ``Regression Error'', ``Predicted Reward'' and ``Data Divergence'' (i.e., adversarial loss and the discriminator), respectively.}
\vspace{4pt}
\label{table:total_ablation}
\small
\begin{center}
\resizebox{\textwidth}{!}{
% \footnotesize
\begin{tabular}{c|c|c|c|c|c|c|c|c}
\toprule[1.5pt]
Datasets & \multicolumn{1}{c|}{\textsc{w/o R$\&$P$\&$D}} & \multicolumn{1}{c|}{\textsc{w/o R$\&$P}} & \multicolumn{1}{c|}{\textsc{w/o R$\&$D}} & \multicolumn{1}{c|}{\textsc{w/o P$\&$D}} & \multicolumn{1}{c|}{\textsc{w/o R}} & \multicolumn{1}{c|}{\textsc{w/o P}} & \multicolumn{1}{c|}{\textsc{w/o D}} & \multicolumn{1}{c}{\textsc{DABand (Full)}}  \\ \midrule
\textsc{DIGIT} & 0.3816\scriptsize{$\pm$0.04} & 0.5676\scriptsize{$\pm$0.02} & 0.3793\scriptsize{$\pm$0.02} & 0.3544\scriptsize{$\pm$0.01} & 0.5682\scriptsize{$\pm$0.01} & 0.5768\scriptsize{$\pm$0.01} & 0.3649\scriptsize{$\pm$0.02} & \textbf{0.6002}\scriptsize{$\pm$0.02} \\[1.5pt] \midrule
\textsc{VisDA17} & 0.1001\scriptsize{$\pm$0.02} & 0.4088\scriptsize{$\pm$0.03} & 0.1010\scriptsize{$\pm$0.02} & 0.0936\scriptsize{$\pm$0.01} & 0.4096\scriptsize{$\pm$0.01} & 0.4304\scriptsize{$\pm$0.01} & 0.1098\scriptsize{$\pm$0.01} & \textbf{0.4644}\scriptsize{$\pm$0.03} \\[1.5pt] \midrule
\textsc{S2RDA49} & 0.1108\scriptsize{$\pm$0.02} & 0.3691\scriptsize{$\pm$0.02} & 0.0918\scriptsize{$\pm$0.01} & 0.1032\scriptsize{$\pm$0.01} & 0.3694\scriptsize{$\pm$0.03} & 0.3719\scriptsize{$\pm$0.02} & 0.1121\scriptsize{$\pm$0.02} & \textbf{0.3923}\scriptsize{$\pm$0.03} \\
\bottomrule[1.5pt]
\end{tabular}
}
\end{center}
\vspace{-0.5cm}
\end{table*}

\section{Full Ablation Studies}
{The full ablation study is shown in~\tabref{table:total_ablation}. Furthermore, we ran the corresponding hypothesis tests, and the p values are in the range of $(3.201 \times 10^{-21}, 1.504 \times 10^{-2})$, much lower than the threshold of $0.05$ and therefore verifying the significance of DABand's performance improvement.} 

\section{Limitation}\label{sec:limitation}
{This work contains several limitations. Specifically, we highlight below:}

{\textbf{Intuitive Perspective.} The bandit algorithm indeed enjoys interpretability~\citep{PACE,VALC} and the accuracy of its closed-form updates~\citep{li2010contextual}, but this is true only because it typically employs a linear model. }

{Unfortunately, linear models do not work very well for real-world data, which is often high-dimensional. For example, the empirical results for LinUCB and LinUCB-P in~\tabref{table:digit-accuracy} and \tabref{table:visda-accuracy} show that such linear contextual bandit algorithms significantly underperform state-of-the-art neural bandit algorithms, which use back-propagation (similar results are shown in~\citep{NeuralUCB, xu2020neural}). In summary deep learning models with back-propagation is necessary due to the following reasons: 
\begin{itemize}[nosep,leftmargin=20pt]
    \item \textbf{High-Dimensional Data.} To enhance performance in real-world high-dimensional data (e.g., images), the integration of deep learning (and back-propagation) is necessary, though this may sacrifice a portion of the algorithm's explanatory power.
    \item \textbf{Covariate Shift and Aligning Source and Target Domains in the Latent Space.} In our settings where there is covariance shift between the source and target domains, a deep (nonlinear) encoder is required to transform the original context into a latent space where source-domain encodings and target-domain encoders can align. This also necessitates deep learning models with back-propagation.
\end{itemize}
Indeed, there is a trade-off between interpretability and performance. This would certainly be an interesting future direction, but it is out of the scope of this paper.}

{\textbf{Theoretical Perspective.} Our~\thmref{thm:target_regret_bound} is sharp, as all the inequalities are based on lemmas in the paper and the Cauchy inequality. Identifying the criteria under which the target regret bound reaches equality as well as how one can achieve it in practice would be interesting future work.}

{\textbf{Empirical Perspective.} The performance of DABand largely relies on the alignment quality between the source and target domains. Therefore, for two domains that cannot be aligned (for example, most domain adaptation tasks are predefined, and the data for both domains is pre-processed and cleaned, not original real-world data), it is challenging to evaluate how DABand can still transfer knowledge across different domains. Furthermore, it is still unknown whether, if we increase the domain shift in a dataset from another domain, our DABand can still perform well. These issues are beyond the scope of this paper, but they represent interesting areas for future work.} 

\section{Discussions}
\subsection{Sublinearity for the Target Regret Bound}

\begin{wrapfigure}{ht}{0.5\textwidth}
\centering
\vskip -0.18in
\includegraphics[width=0.49\textwidth]{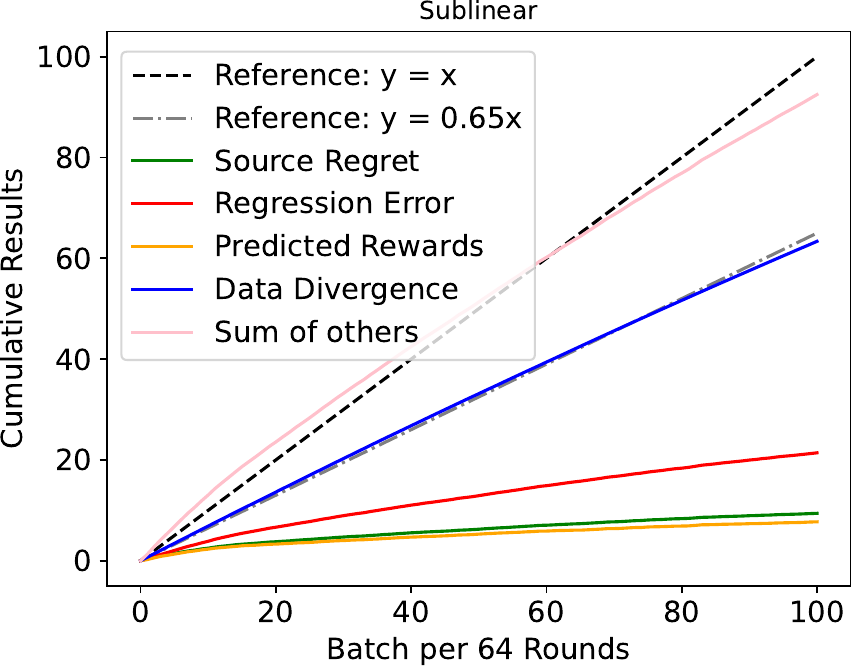}
\vskip -0.3cm
    \caption{Sub-linearity for each term in~\eqnref{eq:total_bound}.}
\vskip -0.95cm
\label{fig:sublin}
\end{wrapfigure}

{To see the data divergence term is sub-linear, note that our target regret bound in~\eqnref{eq:total_bound} can be divided into two parts:
\begin{itemize}
    \item \textbf{Total Source Regret}: In Neural-LinUCB (NLinUCB)~\citep{xu2020neural}, this is shown to be $O(\sqrt{N})$, which is sub-linear.
    \item \textbf{Sum of Other Terms (Including the Data Divergence)}: These terms can be directly optimized to convergence and minimized to a small value using SGD variants such as Adam. As shown in Corollary 4.2 of the Adam paper~\citep{diederik2014adam}, the Adam optimizer enjoys an average regret $R(N)/N$ of $O(\frac{1}{\sqrt{N}})$ (here $N$ is equivalent to $T$ in the Adam paper), which leads to a sub-linear total regret $O(\sqrt{N})$. Since all other terms are directly optimized using Adam, they also enjoy a sub-linear regret.
\end{itemize}}
{Therefore, our target regret bound is also sub-linear. 
Moreover:
\begin{itemize}
    \item \textbf{Key Difference between the Source Regret and Other Terms.} Note that in our target regret bound in~\eqnref{eq:total_bound}, the source regret term $\mathit{R}_{\mathcal{S}}$ increases monotonically with respect to $N$ and \textbf{cannot} be directly minimized. In contrast, all other terms, e.g., the data divergence term $N \cdot \hat{d}_{\mathcal{H}\Delta\mathcal{H}} (\mathcal{S}, \mathcal{T})$ \textbf{can} be directly minimized since all related data is training data and is already known. This is why we minimize the corresponding loss terms like~\eqnref{eq:tr_err}, ~\eqnref{eq:div} and~\eqnref{eq:r_pred} during training. 
    \item The data divergence term can be minimized to a small value as $N \to \infty$. For example, if the source and target domains are perfectly aligned after our training, the data divergence becomes $0$. Therefore, the cumulative sum of the data divergence term over all rounds will be sub-linear.
\end{itemize}}

\subsection{Difference between Bandit and Classification Settings}
{In the classification setting, given a sample $\x$, a model predicts its label $\hat{y}$. Since its ground-truth label is known, one can directly apply the cross-entropy loss to perform back-propagation and update our model. 
In contrast, in bandit settings, for each sample $\x$, our model predicts an action $\hat{a}$ by optimizing the estimated rewards. One then submits this predicted action to the environment and receives feedback that only indicates whether the predicted action $\hat{a}$ is the optimal action $a^*$ or not. If not, we are informed that our prediction is incorrect, yet we do not receive information on what the correct (optimal) action should be. 
{Moreover, during training, we only train on 300 episodes for DIGIT and VisDA17 and 100 episodes for S2RDA49. Each episode contains 64 samples. This implies that compared with those DA methods which train on multiple epochs, in our settings, all of the models only see a sample \textbf{once}.} Such complexity makes the bandit setting much more challenging. }

{\textbf{Why Simultaneously Training Source and Target Domains is Needed and Helpful.} It is also noteworthy that our DABand is an \textbf{end-to-end} model that enables \textbf{two-way feedback between source and target domains}:
\begin{itemize}
    \item \textbf{Source to Target:} Collecting \textbf{source}-domain rewards helps DABand to learn a better encoder (which transforms raw contexts into encodings in the latent space) because the reward signals help the encoder extract \textbf{more relevant encodings} (embeddings) from \textbf{target}-domain contexts, thereby \textbf{reducing the target-domain regret}.
    \item \textbf{Target to Source:} After aligning source-domain and \textbf{target-domain} encodings in the shared latent space (i.e., minimizing the data divergence term in~\eqnref{eq:total_bound}), the information from the \textbf{target}-domain context can then help the \textbf{source} domain to explore arms that are \textbf{more relevant to the target domain}. Ultimately, this also helps reduce the target-domain regret even without collecting target-domain rewards.
\end{itemize}}

\subsection{Importance of Bandits and DABand} 
{Bandit algorithms are designed to navigate environments where feedback is sparse, costly, and indirect. By efficiently learning from limited feedback -- identifying not just when a prediction is wrong but adapting without explicit guidance on the right choice -- bandit algorithms offer a strategic advantage in dynamically evolving settings. Our DABand exemplifies this by leveraging a low-cost source domain to improve performance in a high-cost target domain, thereby reducing cumulative regret (improving accuracy) while minimizing operational costs. DABand not only reduces the expense associated with acquiring and labeling vast datasets but also capitalizes on the intrinsic adaptability of bandit algorithms to learn and optimize in complex, uncertain environments.} 

\subsection{Novelties Restatement} \label{sec:novel}
{\textbf{Theoretical Novelty.} In general, DABand is the \textbf{first} work to perform contextual bandit in a domain adaptation setting, i.e., adapt from a source domain with feedback to a target domain without feedback. Moreover, our theoretical analysis presents two major technical challenges/novelties below: 
\begin{itemize}
    \item \textbf{Generalization of $\mathcal{H}\Delta \mathcal{H}$ Distance to Regression.} In the original multi-domain generalization bound~\citep{bendavid}, the $\mathcal{H} \Delta \mathcal{H}$ divergence between source and target domains is derived for classification models. However, contextual bandit is essentially a reward regression problem, and therefore necessitates generalizing the $\mathcal{H} \Delta \mathcal{H}$ distance from the classification case to the regression case. This presents a significant technical challenge, and leads to a series of modifications in the proof.
    \item \textbf{Decomposing the Target Regret into the Source Regret with Other Terms.} The subsequent major challenge our DABand addresses involves decomposing the regret bound for the target domain (i.e., the target regret) into a source regret term and other terms to upper-bound the target regret. This is necessary because we do not have access to feedback/reward from the target domain (we only have access to feedback/reward in the source domain). This process requires a nuanced understanding of the interplay between source and target domain dynamics within our model's framework.
\end{itemize}}

{\textbf{Other Technical Novelties.} Moreover, our contribution goes beyond the theoretical analysis itself. Specifically, 
\begin{itemize}
    \item We identify the problem of contextual bandits across domains and propose domain-adaptive contextual bandits (DABand) as the first general method to explore a high-cost target domain while only collecting feedback from a low-cost source domain.
    \item Our theoretical analysis shows that our method can achieve a sub-linear regret bound in the target domain. 
    \item Our empirical results on real-world datasets show our DABand significantly improves performance over the state-of-the-art contextual bandit methods when adapting across domains.
\end{itemize}}

\subsection{Discussion on Zero-shot Target Regret Bound}

\subsubsection{Significance of \thmref{thm:target_regret_bound}}
{Note that~\thmref{thm:target_regret_bound} is \textbf{nontrivial}. While it does resemble the generalization bound in domain adaptation, there are key differences. As mentioned in Observation (3) in~\secref{sec:regret_bound}, our target regret bound includes two additional crucial terms not found in domain adaptation.} Specifically:
\begin{itemize}[nosep,leftmargin=20pt]
    \item \textbf{Regression Error in the Source Domain.} $\sum\nolimits_{i=1}^{N} \Bigg( \Big| \langle \theta_{\mathcal{S}}^{*}, \phi_{\mathcal{S}}^{*} (x_{i, \hat{a}_i}^{\mathcal{S}} ) \rangle - \langle \hat{\theta}, \hat{\phi} (x_{i, \hat{a}_i}^{\mathcal{S}} ) \rangle \Big| \Bigg)$, which defines the difference between the true reward from selecting action $\hat{a}_i$ and the estimated reward for this action.
    \item \textbf{Predicted Reward.} $\sum^{N}_{i=1} \left( \left| \langle \hat{\theta}, \hat{\phi} (\x_{i, \hat{a}_{i}}^{\mathcal{T}}) \rangle \right| \right) + \sum^{N}_{i=1} \mathbbm{1}{[a_{i}^{*} \neq \hat{a}_{i}]} \left( \left| \langle \hat{\theta}, \hat{\phi} (\x_{i, \hat{a}_{i}}^{\mathcal{S}}) \rangle \right| \right)$, which serves as a regularization term to regularize the model to avoid overestimating rewards.
\end{itemize}
The results of the ablation study in Table~\ref{table:ablation} in~\secref{sec:res} highlight the significance of these two terms. Please also refer to ``Technical Novelty'' of \secref{sec:novel} above for discussion on key novelty/challenges in deriving~\thmref{thm:target_regret_bound}. 
% Moreover, we would also like to highlight the key novelty/challenges in deriving~\thmref{thm:target_regret_bound}:
% %
% \begin{itemize}[nosep,leftmargin=20pt]
%     \item \textbf{Generalization of $\mathcal{H}\Delta\mathcal{H}$ Distance to Regression.} In the original multi-domain generalization bound [1], the $\mathcal{H}\Delta\mathcal{H}$ divergence between source and target domains is derived for classification models. However, contextual bandit is essentially a reward regression problem, and therefore necessitates generalizing the $\mathcal{H}\Delta\mathcal{H}$ distance from the classification case to the regression case. This presents a significant technical challenge, and leads to a series of modification in the proof.
%     \item \textbf{Decomposing the Target Regret into the Source Regret with Other Terms.} The subsequent challenge involves decomposing the regret bound for the target domain (i.e., the target regret) into a source regret term and other terms to upper-bound the target regret. This is necessary because we do not have access to feedback/reward from the target domain (we only have access to feedback/reward in the source domain). This process requires a nuanced understanding of the interplay between source and target domain dynamics within our model's framework.
% \end{itemize}

\subsubsection{Testing Phase rather than Traditional Bandit Learning Settings}
{Our regret bound in~\thmref{thm:target_regret_bound} is a zero-shot regret bound for the target domain (with empirical results in the \textbf{Zero-Shot Target Regret} paragraph of~\secref{sec:res}), which corresponds to a testing phase. }

{However, we would like to clarify that extending this bound to handle continued training in the target domain (corresponding to the \textbf{Continued Training in Target Domains and Cumulative Regret} paragraph of~\secref{sec:res}) is straightforward. At a high level, we can have 
\begin{align*}
    R_{\mathcal{T}}^{total} = R_{\mathcal{T}}^{zero-shot} + R_{\mathcal{T}}^{continued},
\end{align*}
with the first term handled by our DABand's~\thmref{thm:target_regret_bound} and the second term handled by typical single-domain contextual bandit.}

\subsubsection{Scale of Source Regret and Predicted Rewards}
{Similar to Neural-LinUCB (NLinUCB), in our DABand, the true reward is restricted to the range of $[0,1]$ (as we mentioned in~\secref{sec:prelim}); therefore it will not make the source regret unbounded.}

{Furthermore, our DABand algorithm tries to minimize the source regret while aligning the source and target domains. The minimization of the source regret is theoretically guaranteed, as discussed in recent work such as Neural-LinUCB (NLinUCB)~\citep{xu2020neural}.}

\subsection{Are the Comparisons Fair?}

{\textbf{Leveraging Target-Domain Contexts.} A lot of our baselines \textbf{do leverage} the contexts of the target domain. For example, our baseline LinUCB-P starts by performing PCA jointly on both source-domain and target-domain contexts and then perform LinUCB. Therefore LinUCB-P does leverage the contexts of the target domain. Similarly, NLinUCB-P starts by performing PCA jointly on both source-domain and target-domain context and then perform NLinUCB. It therefore also leverages target-domain contexts. }

{\textbf{Seeing Target-Domain Rewards.} Note that in our domain adaptive bandit settings: 
\begin{itemize}
    \item During the \textbf{zero-shot} phase, target rewards are \textbf{not} visible for all methods (include both baselines and our DABand); it is therefore fair comparison.
    \item During the \textbf{continued-training} phase, all methods (include both baselines and our DABand) start to see the target reward and update their parameters; it is therefore also fair comparison.
\end{itemize}}

\subsection{Why Minimize the Predicted Reward on the Source Domain?}

{While the predicted-reward term is naturally derived from our theoretical analysis, we do find interesting insights when examining this term and its relation to our model. Specifically:
\begin{itemize}
    \item \textbf{Indirect Regularization on Bandit Parameters $\hat{\theta}$ and the Encoder $\hat{\phi}(\cdot)$.} One can see this term as an L1 regularization term. It does not directly regularize the bandit parameter $\hat{\theta}$; however, minimizing the L1 norm of the predicted reward (i.e., a $K$-dimensional vector for a $K$-arm bandit) does \textbf{indirectly} regularize the bandit parameter $\hat{\theta}$ and the encoder $\hat{\phi}(\cdot)$, thereby preventing the L1 norm of the predicted reward from getting to large.
    \item \textbf{Smaller Predicted Rewards for Smaller Variance and Better Stability.} Furthermore, this regularization can help avoid predicting high rewards for all arms in the bandit. \begin{itemize}
        \item Note that for the bandit algorithm to achieve low regret, predicting large rewards are not necessary. This is because one uses the \textbf{argmax} operation (i.e., Line 6 in~\algref{alg:DABand}) to select the best arm; an arm $k$ with a small predicted reward can still be selected as long as all other arms have even smaller predicted rewards.
        \item Too higher predicted rewards are not desirable because they increase the model's sensitivity, leading to higher variance and subsequently increasing the generalization error (i.e., the target regret's bound).
    \end{itemize}
\end{itemize}}

\subsection{Clarification of DABand's Contributions}
{Our DABand is the first general method to explore a target domain while only collecting feedback from the source domain, regardless of linear or nonlinear assumptions. No prior methods have explored this setting of exploring a target domain while only collecting feedback from the source domain.}

{Therefore, DABand's contribution is two-fold: (1) DABand is the first general method to explore a target domain while only collecting feedback from the source domain; (2) DABand is also the first general method in this domain-adaptive bandit setting that works even under general nonlinear assumptions.}

\subsection{Importance of the Discriminator.}
{Our bandit setting aims to explore a target domain while only collecting feedback from the source domain. \textbf{All reward feedback in the target domain is unknown}, making this approach very \textbf{challenging}. Therefore, using a discriminator to align representations for both domains is necessary.} 
{If we ignore the discriminator, the method will not work in our settings. This is also evidenced by the whole results for ablation studies in~\tabref{table:total_ablation}.} 

% \red{Though involving an additional discriminator, our DABand's implementation is actually straightforward. In fact, we have finished cleaning up the source code, and will release it after the paper is accepted to facilitate further research in the community.}

\subsection{How the Terms in the Regret Decay as N increases.}
{To see how the terms in the regret decay: 
\begin{itemize}
    \item \textbf{Decay in the Source Regret.} Most of the decay occurs in the first term (i.e., Source Regret $\mathit{R}\_{\mathcal{S}}$), which has been discussed in the NLinUCB paper~\citep{xu2020neural}. Specifically the \textbf{cumulative} source regret is $O(\sqrt{N})$. Then, dividing both sides of~\eqnref{eq:total_bound} by $N$, we will get the \textbf{average} source regret term with $O(\sqrt{N}/N)$, which does decay with $N$.
    \item \textbf{Decay in the Data Divergence Term.} Decay in the Data Divergence Term. The data divergence term, $\hat{d}_{\mathcal{H} \Delta \mathcal{H}}(\mathcal{S}, \mathcal{T})$, can actually be further decomposed into
    \begin{itemize}
        \item an empirical term that estimates the divergence using $N$ source-domain contexts and $N$ target-domain contexts, and
        \item a term related to $N$ with complexity $\mathcal{O}(\log(2N)/N)$~\citep{bendavid}.
    \end{itemize}
    Therefore, the second term with $\mathcal{O}(\log(2N)/N)$ does also decay as $N$ increases.
\end{itemize}}

\subsection{Understanding the Bound if the Source and Target Domains are equivalent} 
{Even if the input distributions of the two domains match, it does not imply that the relationship between the input context $\mathbf{x}$ and the predicted reward is the same for them, since the data divergence term is only responsible for aligning the distribution of $\mathbf{x}$ (input); it is irrelevant to the output and rewards. This is why other terms are needed to characterize the difference in the relationship between the context $\mathbf{x}$ and the reward in the source and target domains. The data divergence term alone is not sufficient.} 

\subsection{Clarification on the Problem Setting.}
{\textbf{Simultaneous Observation.} In our setting, source-domain contexts and target-domain contexts are simultaneously observed, with only the reward of the source domain being observed. }

{\textbf{Target-Domain Contexts Available Even Before Running~\algref{alg:DABand}.} Note that in practice, target-domain contexts are usually \textbf{available} even before Algorithm 1 starts and \textbf{before observing any reward from the source domain}. (This makes it possible to train our DABand by simultaneously using source-domain and target-domain contexts.)}

{For example, in the case of testing drug reactions on mice (source domain) and humans (target domain), usually one already has the human subjects' genomics, demographic, and other data as target-domain contexts, even before testing the drug on mice (i.e., the source domain) and collecting rewards.} 

{This is because these human genomics/demographic data are easy to collect at a low cost; in contrast, testing the drug on humans (i.e., the target domain) and collecting rewards involves extremely high costs, due to the risks of fatality, side effects, and the enormous costs of conducting clinical trials.}

\section{Potential Impact of DABand} \label{sec:app-impact}
{Our DABand has many potential real-world applications, especially when obtaining responses is costly. For instance, in testing new drugs, we can construct responses from mice for new drug A, and then use DABand to obtain the zero-shot hypothetical regret bound for responses in humans. In another scenario, we can collect data (responses) on humans for another published, similar drug B, and then transfer knowledge by aligning the divergence between drug A and drug B. If both regrets reveal acceptable performance, we might not need excessive costs for back-and-forth testing, which significantly speeds up the process and reduces costs.}

\end{document}